\documentclass{article}

\PassOptionsToPackage{numbers, compress}{natbib}

\usepackage[dblblindworkshop,final]{neurips_2026}
\usepackage{amssymb}
\workshoptitle{Geometric Distributional Deep Learning}
\usepackage{amsfonts}
\usepackage{amsmath}
\usepackage{amsthm}
\usepackage{rotating}
\usepackage[acronym]{glossaries}
\usepackage{nicefrac}
\usepackage{float}
\usepackage{xcolor}
\usepackage{algorithm}%
\usepackage{algorithmicx}%
\usepackage{algpseudocode}%
\usepackage{microtype}
\usepackage[english]{babel}

\usepackage{pdflscape}
\usepackage{float}
\usepackage{multirow}
\usepackage{booktabs}
\newtheorem{theorem}{Theorem}[section]

\usepackage{subcaption}
\usepackage{graphicx}

\newacronym{gd}{GD}{Gradient Descent}
\newacronym{ot}{OT}{Optimal Transport}
\newacronym{dr}{DR}{Dimensionality Reduction}
\newacronym{gw}{GW}{Gromov-Wasserstein}
\newacronym{mds}{MDS}{Multidimensional Scaling}
\newacronym{isomap}{Isomap}{Isometric Mapping}
\newacronym{gw-mds}{GW-MDS}{Gromov-Wasserstein MDS}
\newacronym{pca}{PCA}{Principal Component Analysis}
\newacronym{ewca}{EWCA}{Entropic Wasserstein Component Analysis}
\newacronym{srgw}{srGW}{Semi-Relaxed Gromov-Wasserstein}
\newacronym{srgw-mds}{srGW-MDS}{Semi-Relaxed Gromov-Wasserstein MDS}
\newacronym{kl}{KL}{Kurdyka–Łojasiewicz}

\newtheorem{proposition}[theorem]{Proposition}

\newcommand{\argmin}[1]{\underset{#1}{\text{argmin}}\,}
\newcommand{\argmax}[1]{\underset{#1}{\text{argmax}}\,}

\usepackage[utf8]{inputenc} 
\usepackage[T1]{fontenc}    
\usepackage{hyperref}       
\usepackage{url}            
\usepackage{booktabs}       
\usepackage{amsfonts}       
\usepackage{nicefrac}       
\usepackage{microtype}      
\usepackage{xcolor}         

\title{Gromov--Wasserstein Distillation for Inductive Multi-View Embedding}

\author{%
  Rafael Pereira Eufrazio \\
  Instituto Federal do Ceará (IFCE)\\
  Canindé, Ceará, Brazil\\
  Federal University of Ceará (UFC)\\
  Fortaleza, Ceará, Brazil\\
  \texttt{rafael.eufrazio@ifce.edu.br}
  \And
  Eduardo Fernandes Montesuma \\
  Sigma Nova\\
  Paris, France\\
  \texttt{eduardo.montesuma@sigmanova.ai}
  \And
  Charles Casimiro Cavalcante \\
  Federal University of Ceará (UFC)\\
  Fortaleza, Ceará, Brazil\\
  \texttt{charles@ufc.br}
}

\begin{document}

\maketitle

\begin{abstract}
Gromov--Wasserstein multidimensional scaling (GW-MDS) learns
low-dimensional representations from relational data but remains
transductive, providing no explicit mapping for unseen samples. We introduce
an inductive framework based on barycentric distillation. A GW-MDS teacher
learns a latent support and an optimal transport plan from the training data,
and barycentric projection converts the resulting coupling into
sample-aligned targets. A neural student then learns an explicit
out-of-sample mapping, avoiding additional relational-matrix construction
and GW optimization at inference. We formulate the approach for single-view data and extend it to
Mean-GWMDS and Multi-GWMDS teachers through consensus and
selected-projection targets learned by a multi-view student with
view-specific encoders. We also investigate a
direct neural baseline trained solely with a GW objective. Experiments on
synthetic and real-world data using Euclidean, geodesic, and cosine
relations show that the distilled models preserve the teacher geometry on
unseen samples and consistently outperform direct neural GW training in
sample-indexed relational preservation. These results establish barycentric projection as an effective bridge
between transductive GW embeddings and inductive neural mappings.
\end{abstract}

\section{Introduction}
\label{sec:introduction}

Dimensionality reduction seeks compact representations that preserve the
relevant structure of high-dimensional data. This problem becomes more
challenging in multi-view settings, where the same observations are
described through heterogeneous features, dimensions, or measurement
modalities. In this context, comparing coordinates directly may be
inappropriate. The \gls{gw} discrepancy provides a natural alternative
because it compares within-domain relational structures without requiring
the domains to share a common ambient space
\cite{memoli2011gromov,peyre2019computational}.

GW-MDS uses this principle to optimize a low-dimensional support whose
induced geometry matches that of the input data
\cite{Eufrazio,eufrazio2026nonlinear, eufrazio2026structure, eufrazio2026gromovwassersteinmethodsmultiviewrelational} . Despite its flexibility, GW-MDS is
transductive: it optimizes coordinates only for the observed samples and
does not learn a mapping for unseen data. Moreover, the correspondence
between the input samples and the optimized latent support is represented
by a transport plan and need not coincide with the identity assignment.

Consequently, obtaining an inductive extension is not as simple as
training a neural network directly with a GW objective. Such an objective
compares two unindexed relational structures after optimizing their
coupling. A small GW loss may therefore indicate structural agreement
under a non-identity correspondence, without ensuring that the coordinate
predicted for each sample matches its sample-indexed position in the
teacher representation.

We address this ambiguity through a teacher--student framework based on
barycentric distillation. Building upon the standard optimal transport mapping estimation proposed by Seguy et al. \cite{seguy2017large}, the coupling learned by a transductive GW-MDS
teacher is used to project the optimized latent support onto sample-aligned
targets. A neural student then learns to predict these targets from the
original observations. Once trained, it embeds unseen samples through a
forward pass, without constructing new pairwise relational matrices or
solving another GW problem. The same principle is extended to corresponding
multi-view data using Mean-GWMDS and Multi-GWMDS \cite{eufrazio2026structure}.

Our main contributions are:
\begin{itemize}
    \item a barycentric distillation mechanism that resolves the
    sample-correspondence ambiguity of GW embeddings and converts
    transductive representations into explicit inductive mappings;
    \item single-view and multi-view inductive formulations derived from
    GW-MDS, Mean-GWMDS and Multi-GWMDS teachers; and
    \item an evaluation on synthetic and real-world data across multiple
    relational geometries, showing consistent improvements over direct neural GW training and competitive or superior performance relative to PCA-based inductive baselines.
\end{itemize}

The remainder of this paper is organized as follows.
Section~\ref{sec:Background} reviews the GW discrepancy, GW-MDS, its
multi-view extensions, and barycentric projections.
Section~\ref{sec:methods} introduces the proposed barycentric distillation
framework for single-view and multi-view data, provides its theoretical
justification, and summarizes the complete procedure.
Section~\ref{sec:experiments} describes the experimental protocol and
reports the results on synthetic and real-world datasets.
Finally, Section~\ref{sec:conclusion} presents the main conclusions and
directions for future work. Additional proofs, results, and visualizations
are provided in the supplementary material.

\section{Background}
\label{sec:Background}

This section briefly reviews the Gromov--Wasserstein discrepancy and its use in relational dimensionality reduction.

\subsection{Gromov--Wasserstein Discrepancy}
\label{sec:gw_background}

Consider two discrete relational spaces
\(
\mu=\sum_{i=1}^{n}a_i\delta_{x_i}
\)
and
\(
\nu=\sum_{j=1}^{m}b_j\delta_{y_j},
\)
with probability vectors \(a\in\Delta_n\) and \(b\in\Delta_m\), and
pairwise dissimilarity matrices \(D_X\in\mathbb{R}^{n\times n}\) and
\(D_Y\in\mathbb{R}^{m\times m}\). Their admissible transport plans form
the polytope
\(
    \Pi(a,b)
    =
    \left\{
        T\in\mathbb{R}_{+}^{n\times m}:
        T\mathbf{1}_m=a,\;
        T^\top\mathbf{1}_n=b
    \right\}.
    \label{eq:transport_polytope}
\)
The squared Gromov--Wasserstein (GW) discrepancy is
\begin{equation}
\begin{split}
    \operatorname{GW}^{2}
    \left((D_X,a),(D_Y,b)\right)
    =
    \min_{T\in\Pi(a,b)}
    \sum_{i,i'=1}^{n}
    \sum_{j,j'=1}^{m}
    \left(
        (D_X)_{ii'}-(D_Y)_{jj'}
    \right)^2
    T_{ij}T_{i'j'}.
\end{split}
\label{eq:gw_background}
\end{equation}
Unlike standard optimal transport, GW compares internal relational
structures and therefore does not require the supports to share the
same ambient space. The optimal plan \(T^\star\) encodes the resulting
structural correspondence
\cite{memoli2011gromov,peyre2019computational, peyre2016gromov}.

\subsection{Gromov--Wasserstein Multidimensional Scaling}
\label{sec:gwmds_background}

Classical multidimensional scaling learns low-dimensional coordinates
whose pairwise distances approximate an input dissimilarity matrix under
a fixed pointwise correspondence. Recent optimal-transport formulations
have related dimensionality reduction to Gromov--Wasserstein and
semi-relaxed Gromov--Wasserstein problems
\cite{clark2024generalized,van2024distributional}. GW-MDS replaces this fixed
correspondence with an optimized transport plan
\cite{Eufrazio,eufrazio2026nonlinear}.

Given a dataset
\(X=[x_1,\ldots,x_n]^\top\in\mathbb{R}^{n\times p}\),
a relational matrix \(D_X\in\mathbb{R}^{n\times n}\), and an embedding
dimension \(d\), GW-MDS learns a latent support
\(Z=[z_1,\ldots,z_n]^\top\in\mathbb{R}^{n\times d}\) by solving
\begin{equation}
    Z^\star
    \in
    \argmin{Z\in\mathbb{R}^{n\times d}}
    \operatorname{GW}^{2}
    \left(
        (D_X,a),(D_Z,b)
    \right),
    \qquad
    (D_Z)_{jj'}
    =
    \lVert z_j-z_{j'}\rVert_2,
    \label{eq:gwmds_background}
\end{equation}
where \(a=b=\frac{1}{n}\mathbf{1}_n\) in the full-support setting
considered here. The matrix \(D_X\) may encode Euclidean, cosine,
geodesic, or other application-dependent dissimilarities. The
transport plan and latent coordinates are typically estimated by
alternating between the GW transport problem and gradient-based updates
of \(Z\). Although \(Z^\star\) preserves the relational structure of the input,
its rows are not intrinsically aligned with the input indices, since
their correspondence is mediated by the optimal plan
\(T^\star\in\Pi(a,b)\). A sample-indexed representation is obtained
through the barycentric projection
\begin{equation}
    \widetilde Y^\star
    =
    \mathcal{B}_{T^\star}(Z^\star)
    =
    \operatorname{Diag}(a)^{-1}T^\star Z^\star,
    \qquad
    \widetilde y_i^\star
    =
    \frac{1}{a_i}
    \sum_{j=1}^{n}T_{ij}^\star z_j^\star.
    \label{eq:barycentric_projection}
\end{equation}
Thus, \(\widetilde y_i^\star\) is the transport-weighted barycenter of
the latent points associated with the source observation \(x_i\), making
\(\widetilde Y^\star\) explicitly aligned with the original samples.

Nevertheless, GW-MDS remains transductive: it optimizes coordinates for
the observed dataset but does not learn an explicit mapping for unseen
observations. Section~\ref{sec:methods} addresses this limitation by
distilling the barycentric representation into an inductive neural
mapping.

\subsection{Multi-View Relational Data}
\label{sec:multiview_background}

In the corresponding multi-view setting, the same \(n\) observations
are represented through \(V\) views,
\(
    \mathcal{X}
    =
    \left\{
        X^{(1)},\ldots,X^{(V)}
    \right\}, \)
    \(
    X^{(v)}\in\mathbb{R}^{n\times p_v}.
    \label{eq:multiview_data}
\)
The rows of all views refer to the same samples in the same order,
although their feature dimensions \(p_v\) and ambient spaces may differ.
Each view induces a relational matrix
\(D_X^{(v)}\in\mathbb{R}^{n\times n}\). Consequently, different
views may encode complementary relational structures for the same
observations~\cite{yu2025review,qin2025survey, chowdhury2025deep, xu2013survey}.

Multi-view relational embedding seeks a shared low-dimensional
representation that exploits these complementary structures without
requiring coordinate-wise comparability across views. GW-based methods
are naturally suited to this setting because they compare intra-view
relations rather than the original feature coordinates. However, their
resulting embeddings remain transductive, motivating the inductive
multi-view formulations introduced in Section~\ref{sec:methods}.

\section{Proposed Methods}
\label{sec:methods}

We transform transductive GW-MDS into an inductive mapping through
teacher--student distillation. The teacher transport plan is used to
construct sample-aligned barycentric targets, which supervise a neural
student. We first present the single-view formulation and then extend it to corresponding multi-view data.

\subsection{Single-View Inductive GW-MDS}
\label{sec:single_inductive}

Let
\(X=[x_1,\ldots,x_n]^\top\in\mathbb{R}^{n\times p}\)
be the training data and \(D_X\in\mathbb{R}^{n\times n}\) its relational
dissimilarity matrix. We divide \(D_X\) by its maximum entry and reuse the
notation \(D_X\) for the normalized matrix. As in
Section~\ref{sec:gwmds_background}, we consider uniform full-support measures, i.e. 
\(a=b=\frac{1}{n}\mathbf{1}_n\).

The GW-MDS teacher then solves Eq.~\eqref{eq:gwmds_background}.
After the final latent-support update, the transport plan is recomputed,
yielding \(T^\star\in\Pi(a,b)\) associated with \(Z^\star\).
Because the rows of \(Z^\star\) are not intrinsically aligned with the
input indices, directly pairing \(x_i\) with \(z_i^\star\) would impose an
arbitrary correspondence. We therefore apply the barycentric projection
introduced in Eq.~\eqref{eq:barycentric_projection} to define the
sample-indexed teacher targets:
\(
    \widetilde Y
    =
    \mathcal{B}_{T^\star}(Z^\star)
    =
    \operatorname{Diag}(a)^{-1}T^\star Z^\star.
    \label{eq:barycentric_targets}
\)
Each row \(\widetilde y_i\) is consequently aligned with the source sample
\(x_i\) and can be used to supervise the neural student.

Although Eq.~\eqref{eq:barycentric_projection} provides sample-indexed
targets, it remains to justify why this representation is appropriate for
distillation. For a fixed transport plan and a fixed coordinate realization
of the latent support, the following proposition shows that the barycentric
projection is the unique minimizer of the quadratic reconstruction cost
induced by the transport plan. It therefore provides a principled
sample-indexed representative of the selected teacher solution.

\begin{proposition}[Optimality of barycentric targets]
\label{eq:barycentric_optimality}
Let \(T\in\Pi(a,b)\) be a fixed transport plan, with \(a_i>0\)
for every \(i\), and let \(Z=[z_1,\ldots,z_m]^\top\). Then the
barycentric projection
\(
\widetilde Y
=
\operatorname{Diag}(a)^{-1}TZ,\) 
\(
\widetilde y_i
=
\frac{1}{a_i}\sum_{j=1}^{m}T_{ij}z_j,
\)
is the unique minimizer of
\(
\mathcal{Q}_T(Y;Z)
=
\sum_{i=1}^{n}\sum_{j=1}^{m}
T_{ij}\lVert y_i-z_j\rVert_2^2.
\)
Moreover, for every \(Y=[y_1,\ldots,y_n]^\top\),
\(
\mathcal{Q}_T(Y;Z)
=
\mathcal{Q}_T(\widetilde Y;Z)
+
\sum_{i=1}^{n}a_i
\lVert y_i-\widetilde y_i\rVert_2^2.
\)
\end{proposition}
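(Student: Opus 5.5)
The plan is to prove the decomposition identity first and read off uniqueness from it. The cost $\mathcal{Q}_T(Y;Z)$ separates across rows:
\[
\mathcal{Q}_T(Y;Z)=\sum_{i=1}^{n} q_i(y_i),
\qquad
q_i(y)=\sum_{j=1}^{m}T_{ij}\lVert y-z_j\rVert_2^2 ,
\]
so it is enough to work with one index $i$ at a time.

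For fixed $i$, write $y_i-z_j=(y_i-\widetilde y_i)+(\widetilde y_i-z_j)$ and expand the square. This gives three terms.
\begin{itemize}
\item The first is $\sum_j T_{ij}\lVert y_i-\widetilde y_i\rVert_2^2$. Since $T\in\Pi(a,b)$ gives $T\mathbf{1}_m=a$, this equals $a_i\lVert y_i-\widetilde y_i\rVert_2^2$.
\item The second is $\sum_j T_{ij}\lVert \widetilde y_i-z_j\rVert_2^2=q_i(\widetilde y_i)$.
\item The cross term is $2\langle y_i-\widetilde y_i,\ \sum_j T_{ij}(\widetilde y_i-z_j)\rangle$.
\end{itemize}

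The cross term vanishes. Indeed,
\[
\sum_j T_{ij}(\widetilde y_i-z_j)=a_i\widetilde y_i-\sum_j T_{ij}z_j=0,
\]
by the definition $\widetilde y_i=a_i^{-1}\sum_j T_{ij}z_j$. The assumption $a_i>0$ is used here, since it makes this definition legitimate.

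Summing over $i$ then gives
\[
\mathcal{Q}_T(Y;Z)=\mathcal{Q}_T(\widetilde Y;Z)+\sum_{i=1}^{n} a_i\lVert y_i-\widetilde y_i\rVert_2^2 .
\]

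Optimality and uniqueness follow directly. Because every $a_i>0$, the extra sum is nonnegative and equals zero only when $y_i=\widetilde y_i$ for all $i$. Hence $\mathcal{Q}_T(Y;Z)\ge \mathcal{Q}_T(\widetilde Y;Z)$, with equality exactly when $Y=\widetilde Y$, so $\widetilde Y$ is the unique minimizer.

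There is no real obstacle. The step that needs care is using the marginal constraint $T\mathbf{1}_m=a$ and the positivity of $a$ correctly: positivity is needed both for the projection to be well defined and for the minimizer to be unique. The proof does not use the other marginal constraint $T^\top\mathbf{1}_n=b$.
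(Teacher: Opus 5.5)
Your proposal is correct and follows essentially the same route as the paper's proof. Both use the split $y_i-z_j=(y_i-\widetilde y_i)+(\widetilde y_i-z_j)$ and the row-marginal constraint $T\mathbf{1}_m=a$ to produce $a_i\lVert y_i-\widetilde y_i\rVert_2^2$, kill the cross term through the definition of $\widetilde y_i$, and read off uniqueness from the positivity of the $a_i$.
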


The proof of Proposition~\ref{eq:barycentric_optimality} is provided in Appendix~\ref{app:barycentric_optimality}. This uniqueness is conditional on the selected transport plan and on the
coordinate realization of the latent support. If the latent support is
translated, rotated, or reflected, its barycentric projection undergoes the
same global transformation. Therefore, across isometrically equivalent
teacher solutions, the projected representation is determined only up to a
global Euclidean isometry, while its pairwise distances remain unchanged.

\subsubsection{Neural student}

Following the teacher--student distillation paradigm
\cite{hinton2015distilling} and its representation-matching variants
\cite{romero2015fitnetshintsdeepnets}, let
\(f_\theta:\mathbb{R}^{p}\rightarrow\mathbb{R}^{d}\)
denote the neural student. The barycentric targets are standardized using
statistics computed exclusively from the training set. With a slight abuse
of notation, \(\widetilde y_i\) continues to denote the standardized
barycentric target associated with \(x_i\). The student is trained by
minimizing
\(
\mathcal{L}_{\mathrm{distill}}(\theta)
=
\frac{1}{n}
\sum_{i=1}^{n}
\left\|
f_\theta(x_i)-\widetilde y_i
\right\|_2^2.
\label{eq:single_distillation}
\)
As established in
Proposition~\ref{prop:distillation_relational_control} in the Appendix,
under the uniform empirical weights considered in this work, minimizing
this pointwise objective also controls an upper bound on the average
squared discrepancy between the pairwise Euclidean distances induced by
the student predictions and those induced by the standardized barycentric
targets. Thus, although the student is trained through pointwise
regression, its objective is directly connected to the relational
geometry transferred from the teacher. Early stopping is based on a validation subset. Once trained, the student
provides an inductive out-of-sample mapping and embeds an unseen sample
directly as
\(
y_{\mathrm{new}}=f_\theta(x_{\mathrm{new}}).
\)

\subsection{Multi-View Barycentric Distillation}
\label{sec:multiview_inductive}

For the multi-view data introduced in
Section~\ref{sec:multiview_background}, let \(D_X^{(v)}\) denote the
normalized relational matrix of view \(v\), and let
\(\lambda_v\geq 0\), with \(\sum_{v=1}^{V}\lambda_v=1\), denote its
contribution to the shared objective. We use Mean-GWMDS and
Multi-GWMDS \cite{eufrazio2026structure} as complementary multi-view teachers.

\subsubsection{Inductive Mean-GWMDS}

Mean-GWMDS first averages the relational matrices,
\(
\overline D_X=\sum_{v=1}^{V}\lambda_vD_X^{(v)}
\),
and then computes the teacher support and its consensus barycentric target:
\(
Z_{\mathrm{mean}}^\star
\in
\argmin{Z\in\mathbb{R}^{n\times d}}
\operatorname{GW}^{2}
\left(
    (\overline D_X,a),(D_Z,b)
\right), \)
\(\widetilde Y_{\mathrm{mean}}
=
\operatorname{Diag}(a)^{-1}
T_{\mathrm{mean}}^\star
Z_{\mathrm{mean}}^\star
\)
where \(T_{\mathrm{mean}}^\star\) is the corresponding optimal plan.

\subsubsection{Inductive Multi-GWMDS}

Multi-GWMDS instead learns a shared support by jointly minimizing the
view-dependent GW discrepancies:
\(
    Z_{\mathrm{multi}}^\star
    \in
    \argmin{Z\in\mathbb{R}^{n\times d}}
    \sum_{v=1}^{V}
    \lambda_v
    \operatorname{GW}^{2}
    \left(
        (D_X^{(v)},a),(D_Z,b)
    \right).
    \label{eq:multi_teacher}
\)
Each optimal plan \(T^{(v)\star}\) induces a view-specific
sample-indexed projection
\(
    \widetilde Y^{(v)}
    =
    \operatorname{Diag}(a)^{-1}
    T^{(v)\star}Z_{\mathrm{multi}}^\star, \)
    \( v=1,\ldots,V.
    \label{eq:multiview_projections}
\)
These projections are retained separately because they encode distinct
view-dependent correspondences. When a single representation is required, the candidate projections are
evaluated using the Pearson correlation between their induced distance
matrices and the relational structures of the training views. The resulting
view-wise correlations are combined using a prespecified aggregation
criterion. We then select the
projection with the highest aggregated agreement with the training views:
\begin{equation}
    s_v
    =
    \operatorname{Agg}_{u=1,\ldots,V}
    \rho_{\mathrm P}
    \left(
        D_{\widetilde Y^{(v)}},
        D_X^{(u)}
    \right),
    \qquad
    v^\star
    =
    \argmax{v\in\{1,\ldots,V\}}s_v,
\end{equation}
where \(\rho\) is the Pearson correlation between the upper-triangular
entries of the two distance matrices. This selection uses training data
only.

\subsubsection{Multi-view student}

The multi-view student processes each view with a view-specific encoder, concatenates the resulting features, and maps them to the teacher coordinates through a shared output head. Given the Mean-GWMDS target
\(Y^\dagger=\widetilde Y_{\mathrm{mean}}\) or the selected Multi-GWMDS target
\(Y^\dagger=\widetilde Y^{(v^\star)}\), its parameters are learned through
\(
    \min_\theta
    \frac{1}{n}
    \sum_{i=1}^{n}
    \left\|
        f_\theta(\mathbf{x}_i)-y_i^\dagger
    \right\|_2^2,
    \) \(
    \mathbf{x}_i=(x_i^{(1)},\ldots,x_i^{(V)}).
\)
Thus, fusion is performed on learned view-specific features while preserving the sample-aligned supervision provided by the teacher.

\subsection{Direct Neural GW Baseline}
\label{sec:direct_gw}

To assess the contribution of barycentric supervision, we compare distillation
with direct neural minimization of
\(
    \min_\theta
    \sum_{v=1}^{V}
    \lambda_v
    \operatorname{GW}^{2}
    \left(
        (D_X^{(v)},a),
        (D_{f_\theta(\mathcal{X})},b)
    \right),
    \label{eq:direct_multiview_gw}
\) an approach analogous to the optimization of generative networks via Gromov-Wasserstein losses \cite{bunne2019learning}.
The single-view case follows by setting \(V=1\).
At each iteration, the transport plans are recomputed from the current output and treated as fixed during backpropagation. The embeddings are centered and scaled by their diameter using training-set statistics.

\section{Experiments and Discussion}
\label{sec:experiments}

We first evaluate Single-View Inductive GW-MDS on a controlled synthetic manifold and then investigate its multi-view extensions on real-world datasets. 
The source code, notebooks, and data used in the experiments are publicly available on GitHub.\footnote{\url{https://github.com/casarpe/inductive-gw-embedding}}

\subsection{Single-View Evaluation on a Synthetic Manifold}
\label{sec:single_synthetic}

We use an S-curve with \(1{,}250\) samples and noise level \(0.05\),
split into \(1{,}000\) training and \(250\) test samples.
Euclidean, cosine, and geodesic relations are considered, with the
latter computed using \(12\) nearest neighbors. Each relational matrix is
normalized by its maximum. For geodesic evaluation, the teacher uses only the training graph, while test relations are extracted from the test--test block of the full graph, used exclusively for evaluation. The GW-MDS
teacher is optimized for \(100\) iterations using Adam with learning rate
\(0.1\). The student has two \(64\)-unit ReLU layers and learns the
teacher's barycentric targets. Direct neural GW uses the same architecture,
while PCA is fitted only to the training data. Evaluation uses Pearson
correlation, Spearman correlation, trustworthiness (\(k=10\)), and
scale-adjusted stress.

\begin{table*}[h]
\centering
\caption{Out-of-sample relational preservation on the S-curve. Bold indicates the best result for each
geometry.}
\label{tab:single_view_scurve}
\small
\begin{tabular}{llcccc}
\toprule
Geometry & Method
& Pearson \(r\) \(\uparrow\)
& Spearman \(\rho\) \(\uparrow\)
& Trust. \(\uparrow\)
& Stress \(\downarrow\) \\
\midrule
Euclidean
& Ind. GW-MDS
& 0.8869 & 0.8911 & \textbf{0.9300} & 0.2003 \\
& Direct neural GW
& 0.7533 & 0.7462 & 0.8387 & 0.2791 \\
& PCA
& \textbf{0.8959} & \textbf{0.9075} & 0.9149
& \textbf{0.1971} \\
\midrule
Geodesic
& Ind. GW-MDS
& \textbf{0.9980} & \textbf{0.9974}
& \textbf{0.9981} & \textbf{0.0337} \\
& Direct neural GW
& 0.7142 & 0.7248 & 0.9233 & 0.3905 \\
& PCA
& 0.7314 & 0.7601 & 0.9369 & 0.3575 \\
\midrule
Cosine
& Ind. GW-MDS
& \textbf{0.9513} & \textbf{0.9635}
& \textbf{0.9963} & \textbf{0.2644} \\
& Direct neural GW
& 0.1904 & 0.2738 & 0.7772 & 0.6648 \\
& PCA
& 0.6817 & 0.7310 & 0.8786 & 0.4613 \\
\bottomrule
\end{tabular}
\end{table*}

Table~\ref{tab:single_view_scurve} shows that geodesic relations
provide the best overall performance, with Inductive GW-MDS reaching
Pearson and Spearman correlations of \(0.9980\) and \(0.9974\),
trustworthiness of \(0.9981\), and stress of \(0.0337\). Cosine relations
also yield strong correlation and neighborhood preservation, whereas,
under Euclidean relations, Inductive GW-MDS performs comparably to PCA
and achieves the highest trustworthiness.

\begin{figure}[!h]
\centering
\begin{subfigure}[t]{0.32\linewidth}
    \centering
\includegraphics[width=\linewidth]{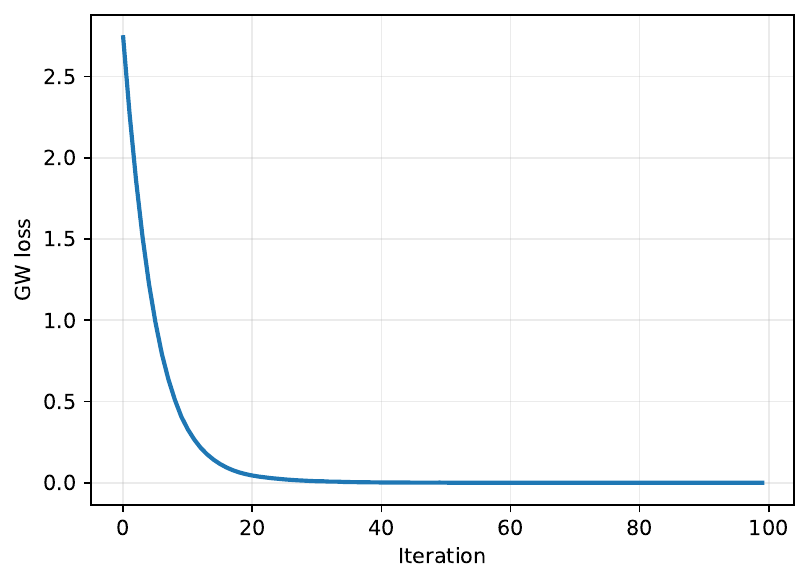}
    \caption{GW-MDS teacher loss.}
    \label{fig:single_teacher_loss}
\end{subfigure}
\hfill
\begin{subfigure}[t]{0.32\linewidth}
    \centering
    \includegraphics[width=\linewidth]{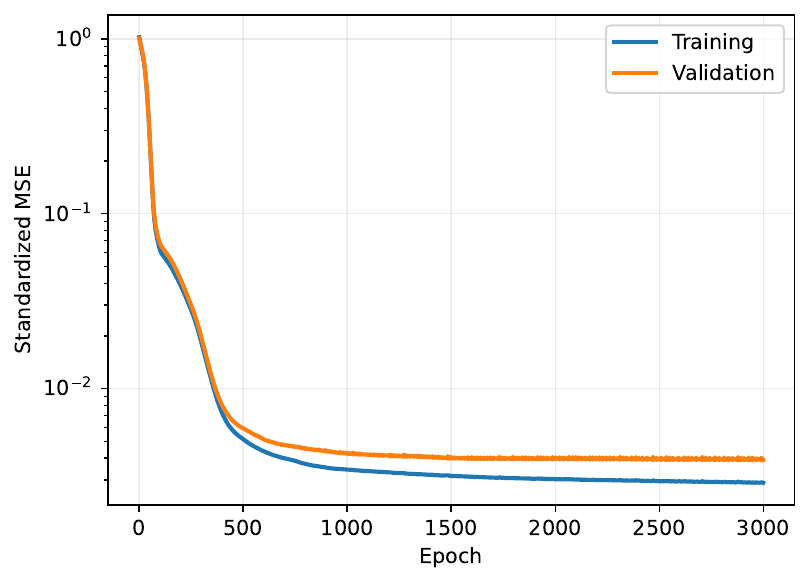}
    \caption{Student distillation loss.}
    \label{fig:single_student_loss}
\end{subfigure}
\hfill
\begin{subfigure}[t]{0.32\linewidth}
    \centering
    \includegraphics[width=\linewidth]{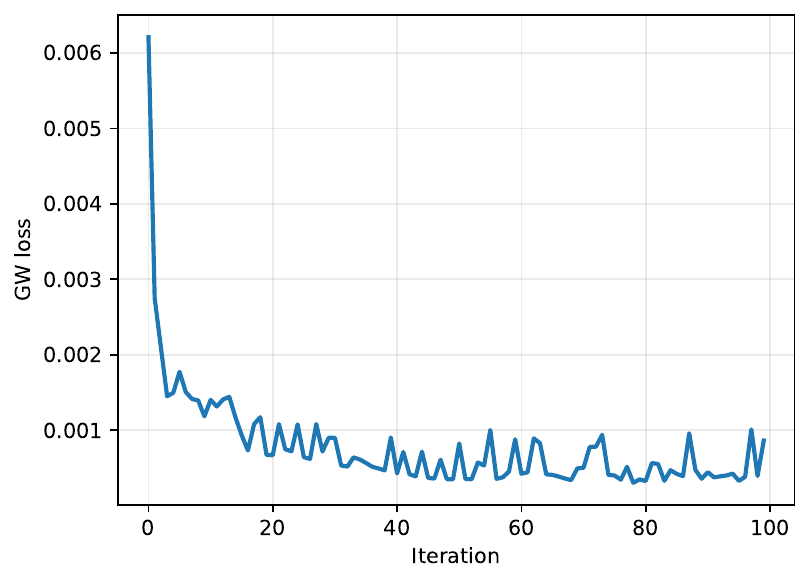} 
    \caption{Direct neural GW loss.}
    \label{fig:single_direct_loss}
\end{subfigure}
\medskip
\begin{subfigure}[t]{0.32\linewidth}
    \centering
    \includegraphics[width=\linewidth]{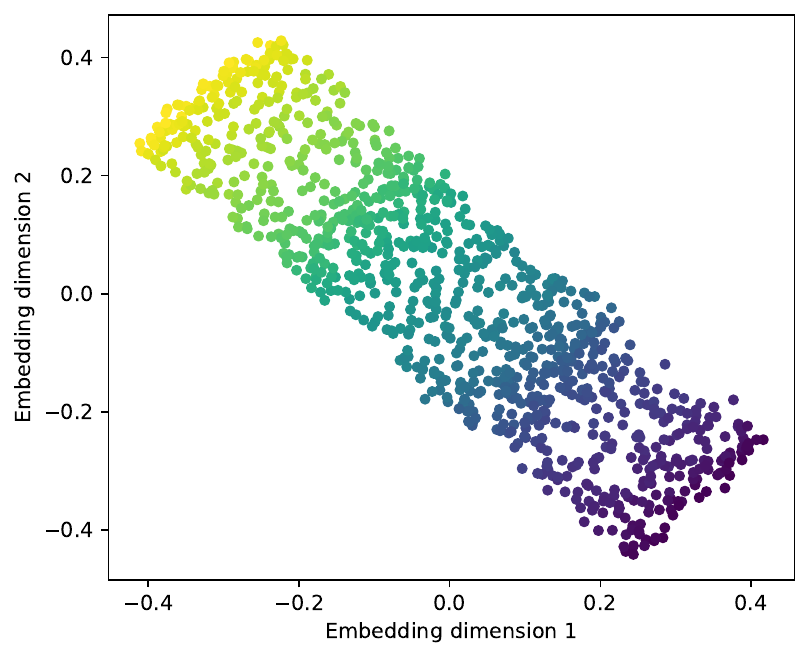}  
    \caption{Teacher training embedding.}
    \label{fig:single_teacher_embedding}
\end{subfigure}
\hfill
\begin{subfigure}[t]{0.32\linewidth}
    \centering
    \includegraphics[width=\linewidth]{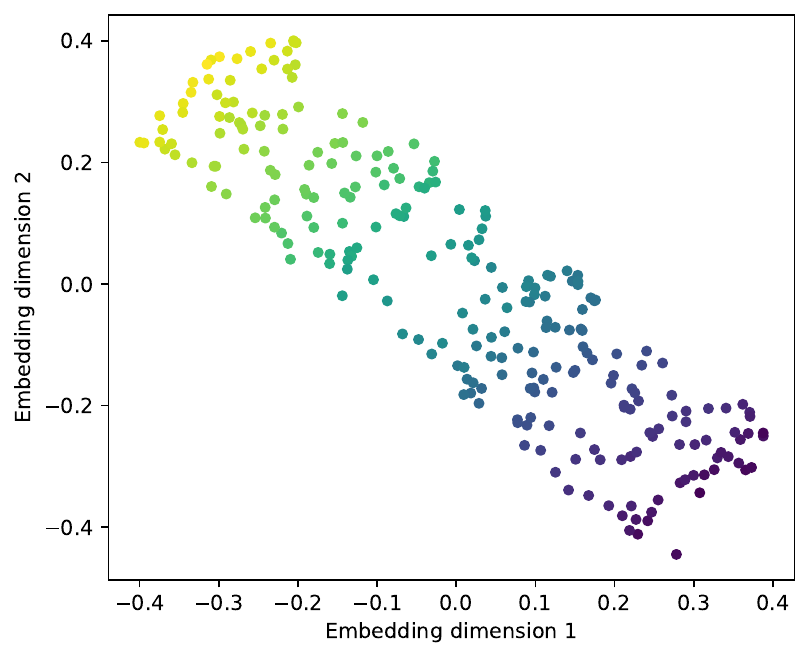}
    \caption{Distilled test embedding.}
    \label{fig:single_student_embedding}
\end{subfigure}
\hfill
\begin{subfigure}[t]{0.32\linewidth}
    \centering
    \includegraphics[width=\linewidth]{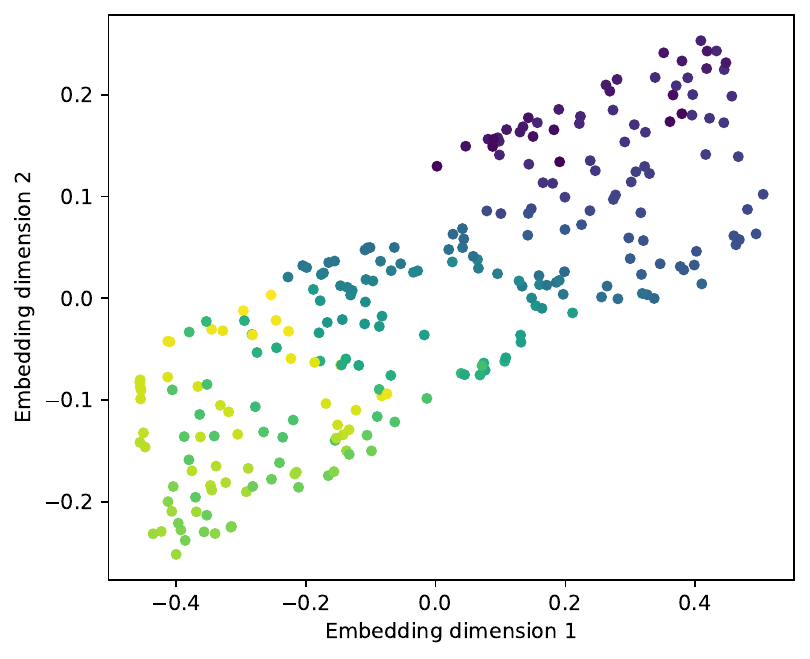}
    \caption{Direct GW test embedding.}
    \label{fig:single_direct_embedding}
\end{subfigure}
\caption{Single-view results on the S-curve using geodesic
dissimilarities.}
\label{fig:single_scurve}
\end{figure}

Inductive GW-MDS consistently outperforms direct neural GW. For the
geodesic setting, Pearson correlation increases from \(0.7142\)
to \(0.9980\), while stress decreases from \(0.3905\) to \(0.0337\).
This indicates that a low direct GW loss does not necessarily recover the
sample-indexed organization, since the optimized coupling may encode a
non-identity correspondence. Barycentric distillation removes this
ambiguity by providing sample-aligned targets that can be extended to
unseen data.

Figure~\ref{fig:single_scurve} shows that both neural approaches learn
structured representations of the unseen samples. Differences in global
orientation should not be interpreted as errors, since GW-based objectives
are invariant to rotations and reflections. The relevant distinction lies
in the local sample organization, the distilled student more closely
reproduces the structure induced by the teacher's barycentric projection,
whereas the direct baseline obtains a low GW loss without recovering the
same sample-indexed arrangement. This qualitative result suggests that a
low direct GW objective does not necessarily ensure preservation of the
teacher's sample-level correspondence.

\subsection{Real-World Datasets}
We evaluate the proposed framework on two real-world datasets from distinct scientific domains: ERA5 and rMD17--Aspirin.

\paragraph{ERA5.}
ERA5 is the fifth-generation atmospheric reanalysis produced by the
European Centre for Medium-Range Weather Forecasts
(ECMWF)~\cite{ hersbach2023era5singlelevels}.
We use hourly data on single levels obtained from the Copernicus Climate
Change Service Climate Data Store (CDS). We consider hourly observations from
January 2024 over a regular grid containing \(23\times21=483\) spatial
locations and \(744\) time instants. Each spatial location constitutes
one sample, represented by its complete hourly time series. Four
co-registered views are constructed from 2-m temperature
(\texttt{t2m}), 2-m dew-point temperature (\texttt{d2m}), surface
pressure (\texttt{sp}), and total precipitation (\texttt{tp}).
Consequently, each view is represented by a matrix
\(X^{(v)}\in\mathbb{R}^{483\times744}\). Although the views share the
same locations and sample ordering, each atmospheric variable induces
a distinct relational geometry.

\paragraph{rMD17--Aspirin.}
We select \(1{,}000\) approximately equidistant conformations from the temporally ordered rMD17 aspirin trajectory, which contains \(100{,}000\) conformations of a \(21\)-atom molecule~\cite{christensen2020rmd17}.
Each conformation is represented by two rotation-invariant views: the \(210\) pairwise interatomic distances and the \(231\) upper-triangular
entries of the force Gram matrix \(F_iF_i^\top\), including its diagonal, where \(F_i\in\mathbb{R}^{21\times3}\).

\subsection{Experimental Protocol}
\label{sec:experimental_protocol}

ERA5 and rMD17--Aspirin are split into \(80\%\) training and \(20\%\)
test samples, yielding \(386/97\) locations for ERA5
and \(800/200\) conformations for rMD17--Aspirin. Fifteen percent of each training set is reserved for validation during student distillation.


For both datasets, Euclidean, cosine, and geodesic relational
matrices are constructed, with graph-geodesic distances computed from a
\(12\)-nearest-neighbor graph. Each matrix is normalized by its maximum
entry. 

The Mean-GWMDS and Multi-GWMDS teachers are optimized for \(100\) outer
iterations using Adam with learning rate \(0.1\). Each inner GW problem
is solved for at most \(100\) iterations with tolerance \(10^{-5}\). For
Multi-GWMDS, the barycentric projection with the highest mean agreement
across the training views is used for distillation. The students employ
two-layer view-specific encoders with \(64\) ReLU units per layer, followed
by a \(64\)-unit fusion layer and a two-dimensional output. They are
trained against the barycentric targets using Adam (learning rate
\(10^{-3}\), weight decay \(10^{-5}\)) for at most \(3{,}000\) epochs,
with a \(15\%\) validation split and early stopping patience of \(250\)
epochs. Direct GW uses the same architecture, while concatenated PCA
serves as the conventional baseline.

Results are uniformly averaged over the four ERA5 views or the two rMD17--Aspirin views. Higher correlation and trustworthiness and lower stress indicate better preservation.
Additional view-wise results, projection-selection scores, GW objective
values, computational costs, and qualitative analyses are provided in
Appendix~\ref{app:supplementary_analysis}.

\subsection{ERA5 Results}

\begin{table*}[t]
\centering
\caption{Out-of-sample relational preservation on ERA5 and
rMD17--Aspirin. ERA5 results are uniform averages over four views on
\(97\) held-out locations, whereas rMD17 results are averaged over two
views. Bold indicates
the best result for each dataset and relational geometry.}
\label{tab:multiview_inductive_results}
\scriptsize
\setlength{\tabcolsep}{5.2 pt}
\begin{tabular}{llcccccccc}
\toprule
& &
\multicolumn{4}{c}{ERA5} &
\multicolumn{4}{c}{rMD17--Aspirin} \\
\cmidrule(lr){3-6}
\cmidrule(lr){7-10}
Geometry & Method
& \(r\uparrow\) & \(\rho\uparrow\)
& Trust.\(\uparrow\) & Stress\(\downarrow\)
& \(r\uparrow\) & \(\rho\uparrow\)
& Trust.\(\uparrow\) & Stress\(\downarrow\) \\
\midrule

Euclidean
& Ind. Mean-GWMDS
& \(\mathbf{0.7853}\) & \(\mathbf{0.7958}\)
& \(\mathbf{0.8975}\) & \(\mathbf{0.2798}\)
& \(\mathbf{0.4736}\) & \(\mathbf{0.4635}\)
& \(\mathbf{0.6852}\) & \(\mathbf{0.3899}\) \\

& Ind. Multi-GWMDS
& \(0.7470\) & \(0.7572\) & \(0.8643\) & \(0.3137\)
& \(0.4065\) & \(0.4018\) & \(0.6783\) & \(0.4014\) \\

& Concatenated PCA
& \(0.7790\) & \(0.7892\) & \(0.8924\) & \(0.2916\)
& \(0.4256\) & \(0.4266\) & \(0.6761\) & \(0.4364\) \\

& Direct multi-view GW
& \(0.4576\) & \(0.5372\) & \(0.8590\) & \(0.4670\)
& \(0.3540\) & \(0.3552\) & \(0.6531\) & \(0.4142\) \\

\midrule

Geodesic
& Ind. Mean-GWMDS
& \(\mathbf{0.8507}\) & \(\mathbf{0.8439}\)
& \(\mathbf{0.9373}\) & \(\mathbf{0.2603}\)
& \(\mathbf{0.4787}\) & \(\mathbf{0.4701}\)
& \(\mathbf{0.6877}\) & \(\mathbf{0.3897}\) \\

& Ind. Multi-GWMDS
& \(0.7855\) & \(0.7826\) & \(0.9049\) & \(0.3106\)
& \(0.4226\) & \(0.4135\) & \(0.6849\) & \(0.3984\) \\

& Concatenated PCA
& \(0.7879\) & \(0.7825\) & \(0.8765\) & \(0.3131\)
& \(0.4126\) & \(0.4080\) & \(0.6630\) & \(0.4153\) \\

& Direct multi-view GW
& \(0.3409\) & \(0.4062\) & \(0.7303\) & \(0.5349\)
& \(0.3471\) & \(0.3486\) & \(0.6466\) & \(0.4272\) \\

\midrule

Cosine
& Ind. Mean-GWMDS
& \(\mathbf{0.5394}\) & \(\mathbf{0.6582}\)
& \(\mathbf{0.8736}\) & \(\mathbf{0.5641}\)
& \(\mathbf{0.5242}\) & \(\mathbf{0.5132}\)
& \(\mathbf{0.6812}\) & \(\mathbf{0.3926}\) \\

& Ind. Multi-GWMDS
& \(0.4830\) & \(0.5875\) & \(0.8322\) & \(0.5712\)
& \(0.4437\) & \(0.4379\) & \(0.6810\) & \(0.4188\) \\

& Concatenated PCA
& \(0.4799\) & \(0.5959\) & \(0.7971\) & \(0.5890\)
& \(0.4222\) & \(0.4247\) & \(0.6761\) & \(0.4340\) \\

& Direct multi-view GW
& \(0.4416\) & \(0.5597\) & \(0.8070\) & \(0.5997\)
& \(0.2942\) & \(0.2972\) & \(0.6371\) & \(0.4703\) \\

\bottomrule
\end{tabular}
\end{table*}

Table~\ref{tab:multiview_inductive_results} shows that Inductive Mean-GWMDS
achieves the best value for all four metrics under every relational
geometry. Its advantage over concatenated PCA is modest for Euclidean
relations, with improvements of \(0.0063\) in Pearson correlation and
\(0.0118\) in stress, but becomes more pronounced for cosine and
geodesic dissimilarities. The geodesic configuration yields
the strongest overall performance, with \(r=0.8507\),
\(\rho=0.8439\), trustworthiness \(0.9373\), and stress \(0.2603\).
This indicates that geodesic-based relations better capture the nonlinear
spatial organization shared by the meteorological variables.

The view-wise results show that this advantage is not uniform across
variables. Under Euclidean geometry, Inductive Mean-GWMDS obtains
Pearson correlations of \(0.8995\), \(0.9232\), and \(0.8981\) for
temperature, dewpoint temperature, and surface pressure, respectively,
but only \(0.4203\) for total precipitation. Geodesic relations
increase the precipitation correlation to \(0.7021\), while maintaining
correlations above \(0.87\) for the other three views. Thus, their
superior averaged performance arises partly from a better integration
of the precipitation view, whose relational structure is less
compatible with the remaining meteorological variables. Cosine
dissimilarity presents a different limitation for surface pressure:
its Pearson correlation is \(0.2667\), whereas its Spearman correlation
remains \(0.6706\). This discrepancy suggests that the relational
ordering is partially retained, but its linear distance magnitudes are
poorly reproduced, as also reflected by the higher stress.

Inductive Multi-GWMDS remains competitive with PCA, particularly under geodesic geometry, for which it provides higher trustworthiness
and lower stress. The training-only selection criterion chooses the
temperature-induced projection (view 1) for all three geometries. Under geodesic relations, for example, the selected
student obtains Pearson correlations of \(0.9393\), \(0.7821\),
\(0.7772\), and \(0.6436\) across the four views, whereas Inductive
Mean-GWMDS obtains \(0.9194\), \(0.9045\), \(0.8767\), and \(0.7021\).
Mean-GWMDS therefore sacrifices a small amount of temperature-view
fidelity while substantially improving the representation of the other
three views.

Finally, direct multi-view GW performs markedly worse despite reaching
almost the same final GW objective as the Multi-GWMDS teacher. 
Most notably, the direct model attains a slightly lower geodesic
objective but only \(r=0.3409\), compared with \(0.7855\) for Inductive
Multi-GWMDS and \(0.8507\) for Inductive Mean-GWMDS. This result confirms that minimizing the GW objective can recover structural agreement under
an optimized coupling without recovering the required sample-indexed
organization. Barycentric distillation addresses this ambiguity by
transferring the correspondence encoded by the teacher's transport plan
to the inductive student.

\begin{figure*}[t]
\centering

\begin{subfigure}[t]{0.24\textwidth}
    \centering
    \includegraphics[width=\linewidth]{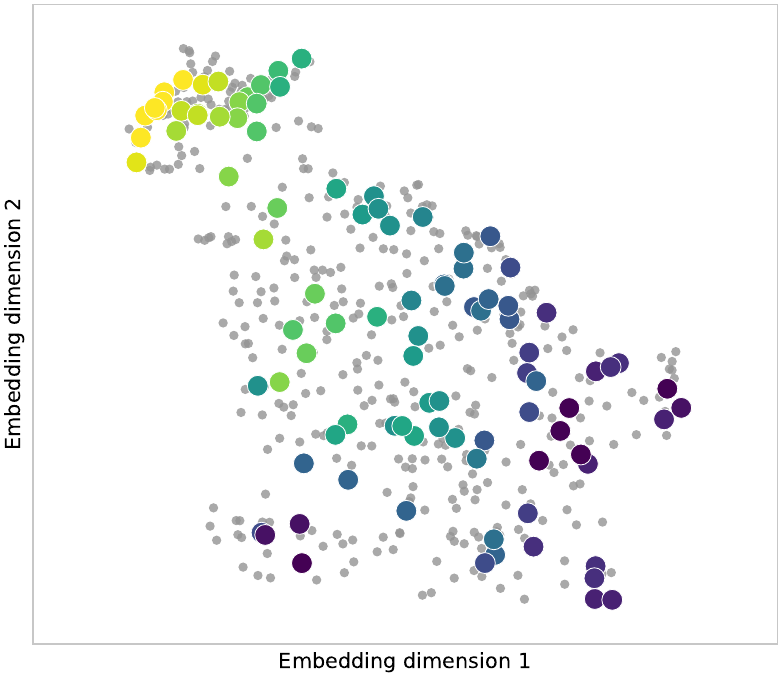}
    \caption{Inductive Mean-GWMDS.}
    \label{fig:era5_joint_mean}
\end{subfigure}
\hfill
\begin{subfigure}[t]{0.24\textwidth}
    \centering
    \includegraphics[width=\linewidth]{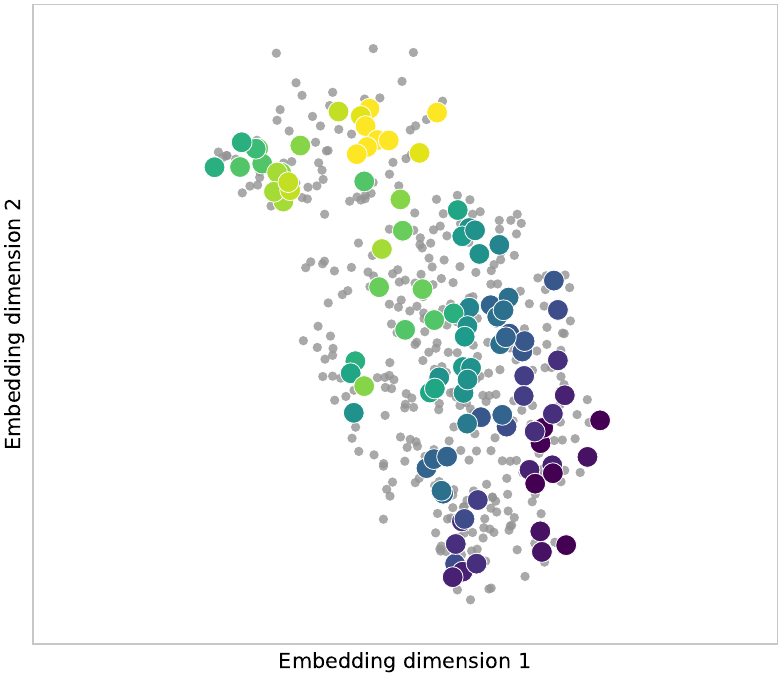}
    \caption{Inductive Multi-GWMDS.}
    \label{fig:era5_joint_multi}
\end{subfigure}
\hfill
\begin{subfigure}[t]{0.24\textwidth}
    \centering
    \includegraphics[width=\linewidth]{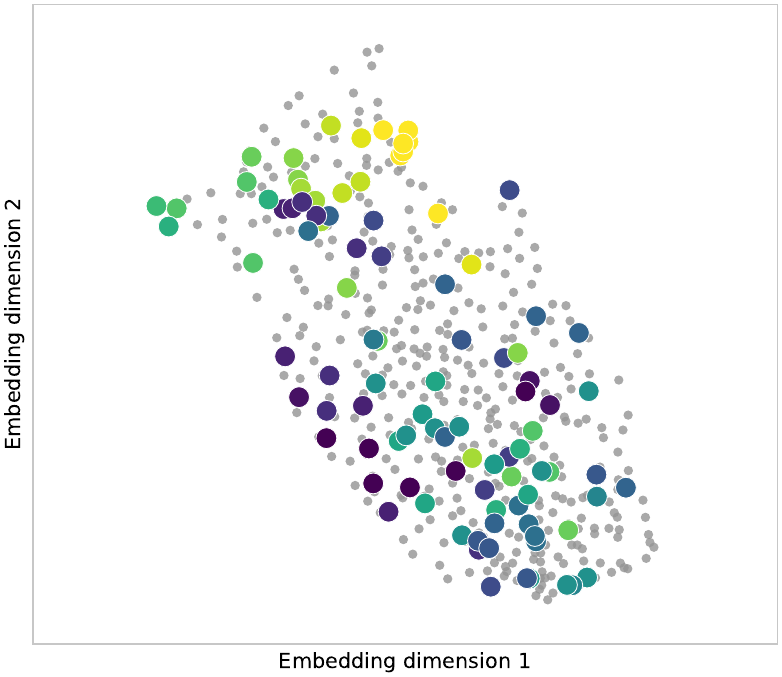}
    \caption{Direct multi-view GW.}
    \label{fig:era5_joint_direct}
\end{subfigure}
\hfill
\begin{subfigure}[t]{0.24\textwidth}
    \centering
    \includegraphics[width=\linewidth]{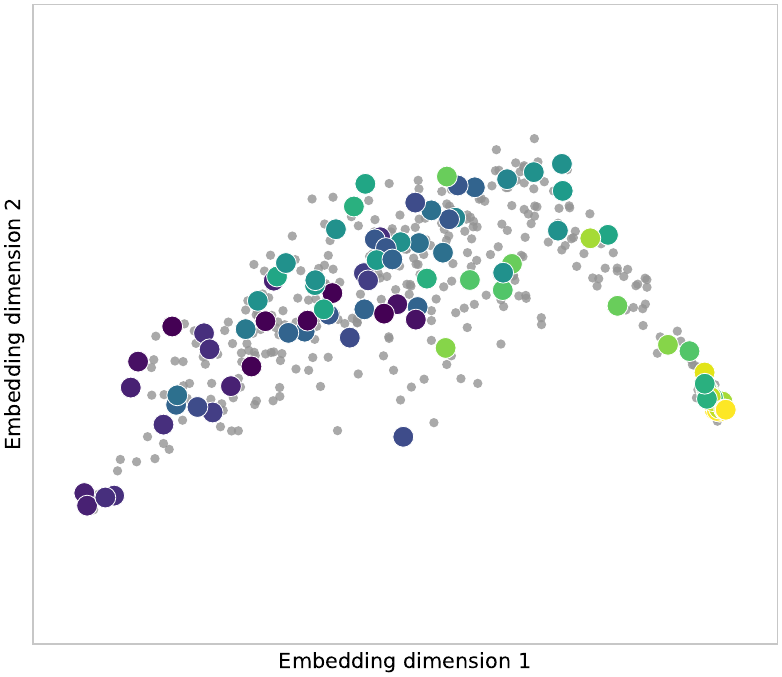}
    \caption{Concatenated PCA.}
    \label{fig:era5_joint_pca}
\end{subfigure}

\caption{Joint training and out-of-sample embeddings for ERA5 under
geodesic relations. Training locations are shown in gray, whereas the
97 held-out locations are colored by latitude using a common color
scale. }
\label{fig:era5_joint_train_test}

\end{figure*}

Figure~\ref{fig:era5_joint_train_test} complements the quantitative
results by showing how the held-out ERA5 locations are positioned
relative to the representations learned from the training data. Both
distilled models place the unseen locations within the structured
regions occupied by the training observations and exhibit a coherent
progression with latitude. Inductive Mean-GWMDS produces the clearest consensus organization.
Inductive Multi-GWMDS also reflects a consensus geometry learned
jointly from all views, although its sample-indexed arrangement is
inherited from the selected barycentric projection. In contrast, the
direct multi-view GW representation is more diffuse and shows greater
mixing among locations with different latitudes.

\subsection{Results on rMD17--Aspirin}
\label{sec:rmd17_results}

The rMD17--Aspirin results in
Table~\ref{tab:multiview_inductive_results} show that Inductive
Mean-GWMDS outperforms all competing methods across every geometry and
metric. Its advantage over concatenated PCA is particularly clear for
cosine dissimilarity, for which the Pearson correlation increases from
\(0.4222\) to \(0.5242\). Cosine dissimilarity provides the highest
global correlations, whereas geodesic relations yield the highest
trustworthiness and lowest stress. The latter differences relative to
Euclidean geometry are small, however, indicating a modest trade-off:
cosine relations favor global distance agreement, while Euclidean and geodesic relations provide slightly better local-neighborhood and
metric preservation.

The averaged results conceal a strong asymmetry between the molecular
views. The training-only selection criterion chooses the projection
induced by the interatomic-distance view under all three geometries.
Under cosine dissimilarity, Inductive Multi-GWMDS consequently attains
correlations of \(0.8204\) and \(0.0669\) with the interatomic-distance
and force-derived structures, respectively, whereas Inductive
Mean-GWMDS obtains \(0.7740\) and \(0.2743\). The same pattern occurs
under Euclidean and geodesic relations: Mean-GWMDS consistently
improves preservation of the force-derived view while retaining high
fidelity to the interatomic-distance view. Its superior averaged
performance therefore results from a more balanced relational
consensus, suggesting that the two molecular structures are only
partially compatible within a shared two-dimensional representation.

The training-set diagnostics further show that distillation closely
preserves the teachers' relational performance. Across the three
geometries, the difference in weighted Pearson correlation between each
teacher and its student is at most \(0.0197\) for Mean-GWMDS and
\(0.0045\) for the selected Multi-GWMDS projection. In contrast, direct
multi-view GW remains inferior despite reaching objective values nearly
identical to those of the Multi-GWMDS teacher; under cosine
dissimilarity, for example, the values are \(0.00920\) and \(0.00919\),
respectively. This near equality rules out insufficient GW optimization
as the main explanation for the performance gap. A low GW loss ensures
structural agreement only under the optimized coupling and does not
guarantee the required sample-indexed organization. 

\begin{figure*}[t]
\centering

\begin{subfigure}[t]{0.24\textwidth}
    \centering
    \includegraphics[width=\linewidth]{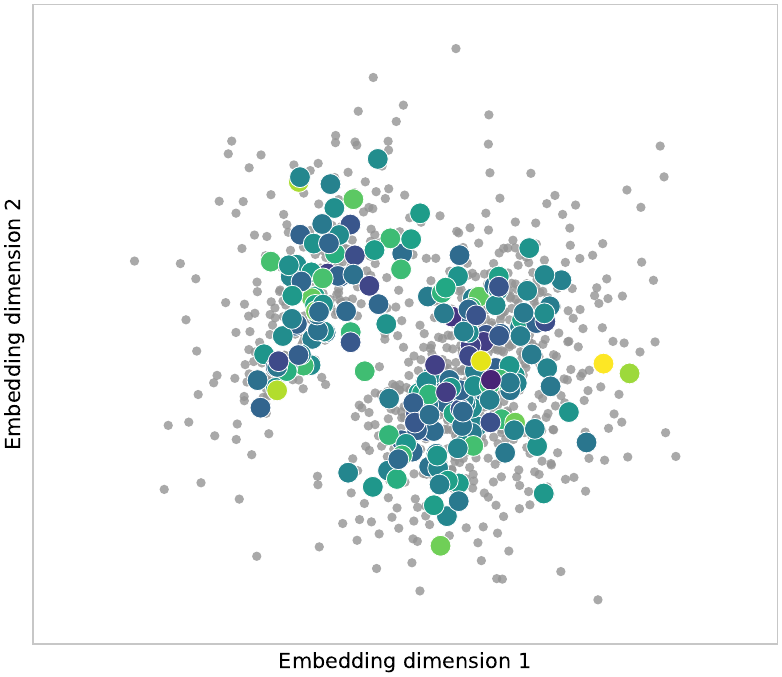}
    \caption{Inductive Mean-GWMDS.}
    \label{fig:aspirin_joint_mean}
\end{subfigure}
\hfill
\begin{subfigure}[t]{0.24\textwidth}
    \centering
    \includegraphics[width=\linewidth]{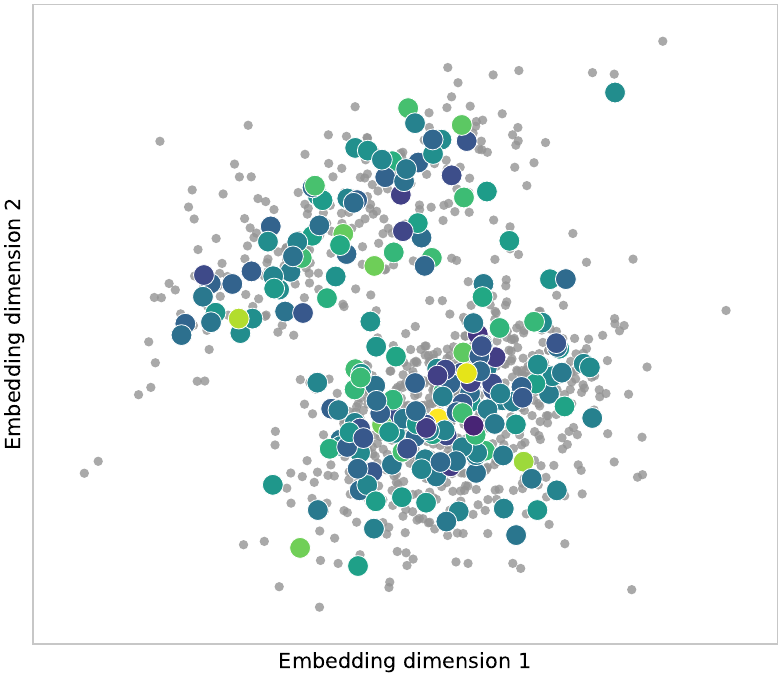}
    \caption{Inductive Multi-GWMDS.}
    \label{fig:aspirin_joint_multi}
\end{subfigure}
\hfill
\begin{subfigure}[t]{0.24\textwidth}
    \centering
    \includegraphics[width=\linewidth]{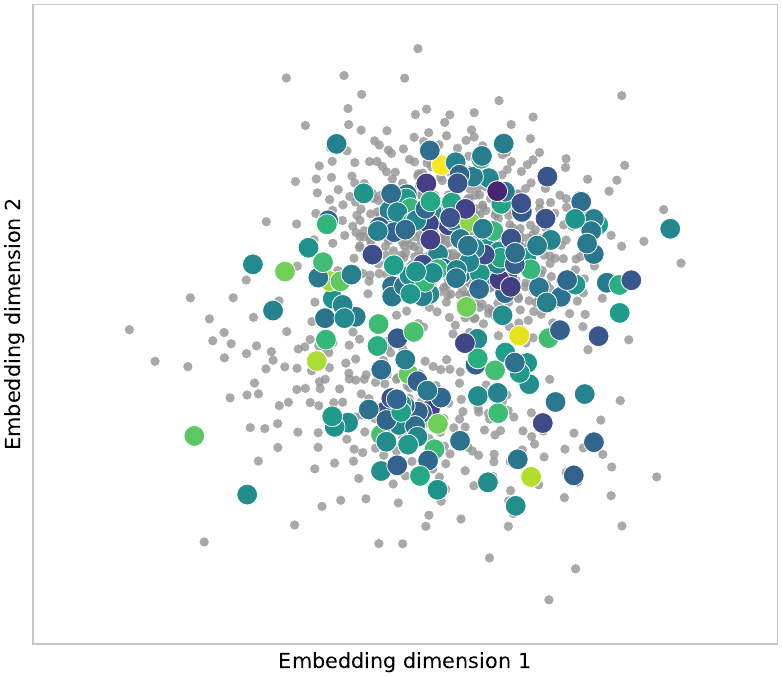}
    \caption{Direct multi-view GW.}
    \label{fig:aspirin_joint_direct}
\end{subfigure}
\hfill
\begin{subfigure}[t]{0.24\textwidth}
    \centering
    \includegraphics[width=\linewidth]{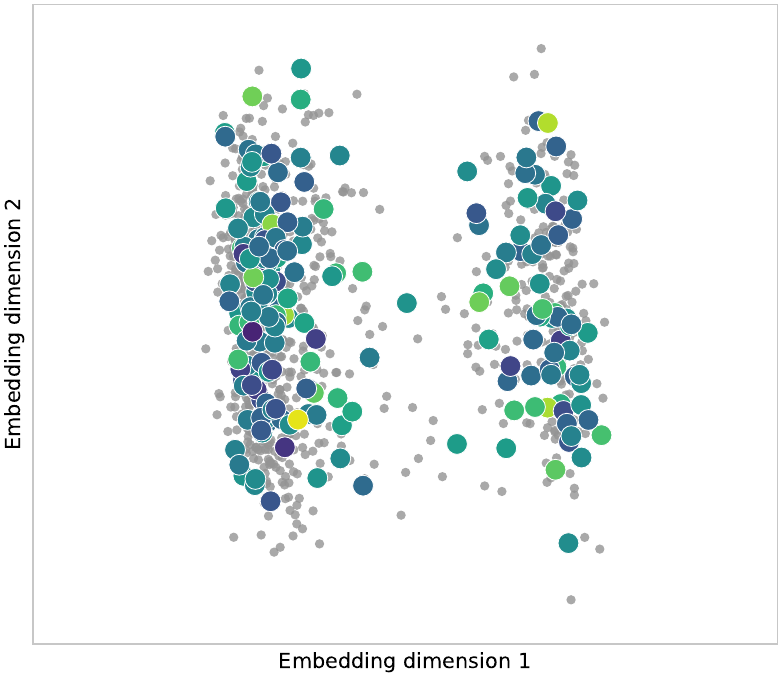}
    \caption{Concatenated PCA.}
    \label{fig:aspirin_joint_pca}
\end{subfigure}

\caption{Joint training and out-of-sample embeddings for
rMD17--Aspirin under cosine relations. Training conformations are
shown in gray, whereas the 200 held-out conformations are colored by
normalized potential energy using a common color scale.}
\label{fig:aspirin_joint_train_test}

\end{figure*}

Figure~\ref{fig:aspirin_joint_train_test} illustrates the joint
training and out-of-sample representations obtained under cosine
relations. The held-out conformations generally occupy regions
supported by the training observations in all four embeddings, but
the resulting organizations differ substantially. Inductive
Mean-GWMDS preserves a branched consensus structure, with the test
conformations distributed coherently along its principal regions.
Inductive Multi-GWMDS instead produces two more clearly separated
groups, reflecting the view-dependent organization inherited from the
selected barycentric projection. The direct multi-view GW embedding
is considerably more diffuse, with much of the data concentrated in a
broad central region, whereas PCA captures a coarse separation into
two dominant branches without explicitly balancing the two relational
views. Potential energy is used only as an
auxiliary coloring variable and not as supervision; therefore, its
partial mixing across the embeddings is expected. As in the ERA5
analysis, the relevant comparison concerns local organization and the
placement of held-out observations relative to the training support,
rather than global orientation or scale.

\section{Conclusion}
\label{sec:conclusion}

We introduced an inductive extension of GW-MDS based on barycentric
distillation. The framework separates relational representation learning
from out-of-sample prediction: a transductive teacher optimizes a latent
support and its GW coupling, the barycentric projection converts this
coupling into sample-aligned targets, and a neural student learns an
explicit mapping to these targets. This principle was developed for
single-view GW-MDS and extended to the multi-view setting using the
consensus target of Mean-GWMDS and the selected-projection target of
Multi-GWMDS.
Experiments on the S-curve, ERA5, and rMD17--Aspirin show that the
distilled students effectively preserve relational structure on unseen
samples and consistently outperform direct neural GW training. In the
multi-view experiments, Inductive Mean-GWMDS achieves the strongest
view-averaged preservation by balancing partially compatible relational
structures, whereas Inductive Multi-GWMDS favors the structure associated
with its training-selected projection. Moreover, direct GW training can
reach objective values close to those of the transductive teacher while
producing substantially worse sample-indexed representations. This
confirms that structural agreement under an optimized coupling does not
by itself ensure the correspondence required for out-of-sample
prediction; the information encoded by the transport plan must also be
transferred. Future work will investigate scalable teacher optimization, a clustering-oriented variant of the proposed framework equipped with mini-batch optimization, and broader evaluations involving larger datasets, repeated data splits, and additional multi-view settings.

\section*{Acknowledgments}

This work was partially supported by CNPq under Grants
308512/2023-5 and 420341/2025-0; CNPq/INCT STREAM
(Signal Processing and Transmission for Environmental Analysis
and Monitoring) under Grant 409179/2024-8; and CAPES,
Brazil, under Finance Code 001.

\bibliographystyle{IEEEtran}

\bibliography{sn-bibliography}

@article{memoli2011gromov,
  title="{Gromov--Wasserstein distances and the metric approach to object matching}",
  author={M{\'e}moli, Facundo},
  journal={Foundations of Computational Mathematics},
  volume={11},
  pages={417--487},
  year={2011},
  publisher={Springer}
}

@article{
van2024distributional,
title="{Distributional Reduction: Unifying Dimensionality Reduction and Clustering with Gromov-Wasserstein}",
author={{H. Van Assel, C{\'e}dric Vincent-Cuaz, Nicolas Courty, R{\'e}mi Flamary, Pascal Frossard, Titouan Vayer}},
journal={Transactions on Machine Learning Research},
issn={2835-8856},
year={2025},
url={https://openreview.net/forum?id=cllm6SS354},
note={}
}

@article{peyre2019computational,
  title="{Computational optimal transport: With applications to data science}",
  author={Peyr{\'e}, Gabriel and Cuturi, Marco},
  journal={Foundations and Trends{\textregistered} in Machine Learning},
  volume={11},
  number={5-6},
  pages={355--607},
  year={2019},
  publisher={Now Publishers, Inc.}
}

@article{flamary2021pot,
  title="{POT: Python Optimal Transport}",
  author={Flamary, R{\'e}mi and Courty, Nicolas and Gramfort, Alexandre and Alaya, Mokhtar Z and Boisbunon, Aur{\'e}lie and Chambon, Stanislas and Chapel, Laetitia and Corenflos, Adrien and Fatras, Kilian and Fournier, Nemo and others},
  journal={Journal of Machine Learning Research},
  volume={22},
  number={78},
  pages={1--8},
  year={2021}
}

@inproceedings{clark2024generalized,
  title="{Generalized dimension reduction using semi-relaxed Gromov-Wasserstein distance}",
  author={Clark, Ranthony A and Needham, Tom and Weighill, Thomas},
  booktitle={Proceedings of the AAAI Conference on Artificial Intelligence},
  volume={39},
  number={15},
  pages={16082--16090},
  year={2025}
}

@inproceedings{Eufrazio,
  title="{A dimensionality reduction technique based on the Gromov-Wasserstein distance}",
  author={ \hspace{0.1pt} Eufrazio, Rafael Pereira and Montesuma, Eduardo Fernandes and Cavalcante, Charles Casimiro},
  booktitle={International Conference on Geometric Science of Information},
  pages={111--120},
  year={2025},
  organization={Springer}
}

@article{yu2025review,
  title={A review on multi-view learning},
  author={Yu, Zhiwen and Dong, Ziyang and Yu, Chenchen and Yang, Kaixiang and Fan, Ziwei and Chen, CL Philip},
  journal={Frontiers of Computer Science},
  volume={19},
  number={7},
  pages={197334},
  year={2025},
  publisher={Springer}
}

@article{chowdhury2025deep,
  title={Deep multi-view clustering: A comprehensive survey of the contemporary techniques},
  author={Chowdhury, Anal Roy and Gupta, Avisek and Das, Swagatam},
  journal={Information Fusion},
  pages={103012},
  year={2025},
  publisher={Elsevier}
}

@article{xu2013survey,
  title={A Survey on Multi-view Learning},
  author={Xu, Chang and Tao, Dacheng and Xu, Chao},
  journal={arXiv preprint arXiv:1304.5634},
  year={2013}
}

@inproceedings{peyre2016gromov,
  title={Gromov-wasserstein averaging of kernel and distance matrices},
  author={Peyr{\'e}, Gabriel and Cuturi, Marco and Solomon, Justin},
  booktitle={International conference on machine learning},
  pages={2664--2672},
  year={2016},
  organization={PMLR}
}

@article{qin2025survey,
  title={A survey on representation learning for multi-view data},
  author={Qin, Yalan and Zhang, Xinpeng and Yu, Shui and Feng, Guorui},
  journal={Neural Networks},
  volume={181},
  pages={106842},
  year={2025},
  publisher={Elsevier}
}

@misc{eufrazio2026structure,
      title={Structure-Preserving Multi-View Embedding Using Gromov-Wasserstein Optimal Transport}, 
      author={Rafael Pereira Eufrazio and Eduardo Fernandes Montesuma and Charles Casimiro Cavalcante},
      year={2026},
      eprint={2604.02610},
      archivePrefix={arXiv},
      primaryClass={stat.ML},
      url={https://arxiv.org/abs/2604.02610}, 
}

@article{eufrazio2026nonlinear,
  title={Nonlinear dimensionality reduction through optimal transport between incomparable spaces},
  author={\hspace{0.01pt} Eufrazio, Rafael Pereira and Montesuma, Eduardo Fernandes and Cavalcante, Charles Casimiro},
  journal={Information Geometry},
  pages={1--36},
  year={2026},
  publisher={Springer}
}

@misc{eufrazio2026gromovwassersteinmethodsmultiviewrelational,
      title={Gromov-Wasserstein Methods for Multi-View Relational Embedding and Clustering}, 
      author={\hspace{0.0001pt}Eufrazio, Rafael Pereira and Eduardo Fernandes Montesuma and Charles Casimiro Cavalcante},
      year={2026},
      eprint={2604.23912},
      archivePrefix={arXiv},
      primaryClass={cs.LG},
      url={https://arxiv.org/abs/2604.23912}, 
}

@article{hinton2015distilling,
  title={Distilling the knowledge in a neural network},
  author={Hinton, Geoffrey and Vinyals, Oriol and Dean, Jeff},
  journal={arXiv preprint arXiv:1503.02531},
  year={2015}
}

@misc{romero2015fitnetshintsdeepnets,
      title={FitNets: Hints for Thin Deep Nets}, 
      author={Adriana Romero and Nicolas Ballas and Samira Ebrahimi Kahou and Antoine Chassang and Carlo Gatta and Yoshua Bengio},
      year={2015},
      eprint={1412.6550},
      archivePrefix={arXiv},
      primaryClass={cs.LG},
      url={https://arxiv.org/abs/1412.6550}, 
}

@misc{hersbach2023era5singlelevels,
  author       = {Hersbach, Hans and Bell, Bill and Berrisford, Paul
                  and Biavati, Gionata and Hor{\'a}nyi, Andr{\'a}s
                  and Mu{\~n}oz Sabater, Joaqu{\'i}n and Nicolas, Julien
                  and Peubey, Carole and Radu, Raluca and Rozum, Irina
                  and Schepers, Dinand and Simmons, Adrian
                  and Soci, Cornel and Dee, Dick
                  and Th{\'e}paut, Jean-No{\"e}l},
  title        = {{ERA5 hourly data on single levels from 1940 to present}},
  year         = {2023},
  howpublished = {Copernicus Climate Change Service (C3S)
                  Climate Data Store (CDS)},
  doi          = {10.24381/cds.adbb2d47},
  url          = {https://doi.org/10.24381/cds.adbb2d47},
  note         = {Accessed on 28 July 2026}
}

@misc{christensen2020rmd17,
  author       = {Christensen, Anders S. and von Lilienfeld, O. Anatole},
  title        = {{Revised MD17 dataset (rMD17)}},
  year         = {2020},
  publisher    = {figshare},
  howpublished = {Dataset},
  doi          = {10.6084/m9.figshare.12672038.v4},
  url          = {https://doi.org/10.6084/m9.figshare.12672038.v4}
}

@inproceedings{bunne2019learning,
  title={Learning generative models across incomparable spaces},
  author={Bunne, Charlotte and Alvarez-Melis, David and Krause, Andreas and Jegelka, Stefanie},
  booktitle={International conference on machine learning},
  pages={851--861},
  year={2019},
  organization={PMLR}
}

@article{seguy2017large,
  title={Large-scale optimal transport and mapping estimation},
  author={Seguy, Vivien and Damodaran, Bharath Bhushan and Flamary, R{\'e}mi and Courty, Nicolas and Rolet, Antoine and Blondel, Mathieu},
  journal={arXiv preprint arXiv:1711.02283},
  year={2017}
}


\appendix

\newpage

\section{Supplementary Methodological and Experimental Analysis}
\label{app:supplementary_analysis}

This appendix provides additional theoretical and methodological details, together with extended experimental evidence supporting the main results. We first establish the optimality of the barycentric targets used for teacher--student distillation. We then examine the proposed inductive formulations on ERA5, including teacher--student transfer, projection
selection, objective values, and computational cost. Finally, we evaluate their applicability beyond climate data through complementary experiments on rMD17-Aspirin.

\subsection{Theoretical Properties of Barycentric Distillation}
\label{app:barycentric_optimality}

This subsection establishes three theoretical properties underlying the
barycentric distillation procedure used in the proposed teacher--student
framework. First, for a fixed transport plan and latent support, the
barycentric projection is the unique sample-indexed representation that
minimizes the corresponding quadratic reconstruction cost. Second, the
projection is equivariant under Euclidean isometries of the latent support,
thereby preserving its pairwise relational geometry under global rotations,
reflections, and translations. Finally, the pointwise distillation error
controls an upper bound on the discrepancy between the pairwise Euclidean
distances induced by the student predictions and those of the barycentric
targets.

\begin{proposition}[Optimality of barycentric targets]
\label{prop:barycentric_optimality}
Let \(T\in\Pi(a,b)\) be a fixed transport plan, with \(a_i>0\)
for every \(i\), and let \(Z=[z_1,\ldots,z_m]^\top\). Then the
barycentric projection
\(
\widetilde Y
=
\operatorname{Diag}(a)^{-1}TZ,\) 
\(
\widetilde y_i
=
\frac{1}{a_i}\sum_{j=1}^{m}T_{ij}z_j,
\)
is the unique minimizer of
\(
\mathcal{Q}_T(Y;Z)
=
\sum_{i=1}^{n}\sum_{j=1}^{m}
T_{ij}\lVert y_i-z_j\rVert_2^2.
\)
Moreover, for every \(Y=[y_1,\ldots,y_n]^\top\),
\(
\mathcal{Q}_T(Y;Z)
=
\mathcal{Q}_T(\widetilde Y;Z)
+
\sum_{i=1}^{n}a_i
\lVert y_i-\widetilde y_i\rVert_2^2.
\)
\end{proposition}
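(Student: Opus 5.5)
The plan is to prove the decomposition identity first and read off uniqueness from it. Since \(\mathcal{Q}_T(Y;Z)=\sum_{i=1}^n q_i(y_i)\) with
\[
q_i(y)=\sum_{j=1}^m T_{ij}\lVert y-z_j\rVert_2^2,
\]
the cost separates across rows. It therefore suffices to work with one fixed index \(i\) and then sum over \(i\).

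For fixed \(i\), I would insert \(\widetilde y_i\) by writing \(y_i-z_j=(y_i-\widetilde y_i)+(\widetilde y_i-z_j)\) and expand the square. This gives three terms:
\[
\sum_j T_{ij}\lVert y_i-\widetilde y_i\rVert_2^2
+2\Bigl\langle y_i-\widetilde y_i,\sum_j T_{ij}(\widetilde y_i-z_j)\Bigr\rangle
+\sum_j T_{ij}\lVert \widetilde y_i-z_j\rVert_2^2 .
\]
The marginal constraint \(T\mathbf{1}_m=a\) gives \(\sum_j T_{ij}=a_i\), so the first term equals \(a_i\lVert y_i-\widetilde y_i\rVert_2^2\). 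The cross term vanishes because
\[
\sum_j T_{ij}(\widetilde y_i-z_j)=a_i\widetilde y_i-\sum_j T_{ij}z_j=0,
\]
which holds precisely by the definition \(\widetilde y_i=a_i^{-1}\sum_j T_{ij}z_j\); this is where \(a_i>0\) is needed for \(\widetilde y_i\) to be well defined. The last term is \(q_i(\widetilde y_i)\). Summing over \(i\) yields the identity
\[
\mathcal{Q}_T(Y;Z)=\mathcal{Q}_T(\widetilde Y;Z)+\sum_i a_i\lVert y_i-\widetilde y_i\rVert_2^2 .
\]

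Optimality and uniqueness then follow immediately. The remainder \(\sum_i a_i\lVert y_i-\widetilde y_i\rVert_2^2\) is nonnegative, so \(\mathcal{Q}_T(Y;Z)\ge\mathcal{Q}_T(\widetilde Y;Z)\) for every \(Y\), and \(\widetilde Y\) is a minimizer. If \(Y\) is also a minimizer, the remainder must vanish. Since every \(a_i>0\), this forces \(y_i=\widetilde y_i\) for all \(i\), so \(Y=\widetilde Y\).

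I expect no step to be a real obstacle. The only point needing care is to use the row-marginal constraint (and not the column marginal \(b\)) when simplifying \(\sum_j T_{ij}\), and to make explicit where positivity of \(a_i\) enters: in defining \(\widetilde y_i\) and in the uniqueness step. As a cross-check, one could instead note that each \(q_i\) is a strictly convex quadratic with Hessian \(2a_iI\), and setting its gradient \(2\sum_j T_{ij}(y-z_j)\) to zero recovers \(\widetilde y_i\). The completed-square argument above is preferable because it delivers the stated identity directly.
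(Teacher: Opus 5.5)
Your proposal is correct and follows essentially the same route as the paper's proof. Both insert \(\widetilde y_i\) via \(y_i-z_j=(y_i-\widetilde y_i)+(\widetilde y_i-z_j)\) and use the row marginal \(\sum_j T_{ij}=a_i\) to simplify the first term and annihilate the cross term, then sum over \(i\) to obtain the decomposition and deduce uniqueness from \(a_i>0\).
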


\begin{proof}
Because \(T\in\Pi(a,b)\), its row sums satisfy
\(
\sum_{j=1}^{m}T_{ij}=a_i,\) 
\(i=1,\ldots,n.
\)
Since \(a_i>0\), the \(i\)-th row of the barycentric projection is
well defined and can be written as
\[
\widetilde y_i
=
\frac{1}{a_i}\sum_{j=1}^{m}T_{ij}z_j\Rightarrow
a_i\widetilde y_i
=
\sum_{j=1}^{m}T_{ij}z_j.
\]

Now fix an arbitrary \(Y=[y_1,\ldots,y_n]^\top\). For each pair
\((i,j)\), write
\[
y_i-z_j
=
(y_i-\widetilde y_i)
+
(\widetilde y_i-z_j).
\]
Expanding the squared Euclidean norm gives
\[
\begin{aligned}
\lVert y_i-z_j\rVert_2^2
=
\lVert y_i-\widetilde y_i\rVert_2^2
+
\lVert \widetilde y_i-z_j\rVert_2^2 
+
2\left\langle
y_i-\widetilde y_i,\,
\widetilde y_i-z_j
\right\rangle .
\end{aligned}
\]
Multiplying by \(T_{ij}\) and summing over \(j\), we obtain
\[
\begin{aligned}
\sum_{j=1}^{m}T_{ij}\lVert y_i-z_j\rVert_2^2
=
\left(\sum_{j=1}^{m}T_{ij}\right)
\lVert y_i-\widetilde y_i\rVert_2^2 
+
\sum_{j=1}^{m}T_{ij}
\lVert \widetilde y_i-z_j\rVert_2^2 
+
2\sum_{j=1}^{m}T_{ij}
\left\langle
y_i-\widetilde y_i,\,
\widetilde y_i-z_j
\right\rangle .
\end{aligned}
\]
The first term becomes
\[
\left(\sum_{j=1}^{m}T_{ij}\right)
\lVert y_i-\widetilde y_i\rVert_2^2
=
a_i\lVert y_i-\widetilde y_i\rVert_2^2.
\]
For the cross term, linearity of the inner product yields
\[
\begin{aligned}
\sum_{j=1}^{m}T_{ij}
\left\langle
y_i-\widetilde y_i,\,
\widetilde y_i-z_j
\right\rangle 
 =
\left\langle
y_i-\widetilde y_i,\,
\sum_{j=1}^{m}T_{ij}
(\widetilde y_i-z_j)
\right\rangle .
\end{aligned}
\]
The vector inside the second argument is zero because
\[
\begin{aligned}
\sum_{j=1}^{m}T_{ij}(\widetilde y_i-z_j)
&=
\left(\sum_{j=1}^{m}T_{ij}\right)\widetilde y_i
-
\sum_{j=1}^{m}T_{ij}z_j \\
&=
a_i\widetilde y_i-a_i\widetilde y_i
=0.
\end{aligned}
\]
Consequently,
\[
\sum_{j=1}^{m}T_{ij}\lVert y_i-z_j\rVert_2^2
=
a_i\lVert y_i-\widetilde y_i\rVert_2^2
+
\sum_{j=1}^{m}T_{ij}
\lVert \widetilde y_i-z_j\rVert_2^2.
\]
Summing this identity over \(i\) gives
\[
\mathcal{Q}_T(Y;Z)
=
\mathcal{Q}_T(\widetilde Y;Z)
+
\sum_{i=1}^{n}a_i
\lVert y_i-\widetilde y_i\rVert_2^2.
\]

Since \(a_i>0\), every term in the final sum is nonnegative.
Therefore,
\(
\mathcal{Q}_T(Y;Z)
\geq
\mathcal{Q}_T(\widetilde Y;Z).
\)
Equality holds if and only if
\(
\lVert y_i-\widetilde y_i\rVert_2^2=0\)
for every $i$,
which is equivalent to \(Y=\widetilde Y\). Hence
\(\widetilde Y\) is the unique minimizer for the fixed transport
plan \(T\).
\end{proof}

Therefore, for a fixed transport plan, the barycentric projection is the unique sample-indexed Euclidean representative minimizing the corresponding reconstruction cost. In Mean-GWMDS, this result applies to its single consensus plan. In Multi-GWMDS, it applies separately to each view-dependent plan \(T^{(v)}\), producing one unique barycentric projection per view. These view-dependent projections constitute the candidate targets used by the subsequent projection-selection procedure.

\begin{proposition}[Equivariance under Euclidean isometries]
\label{cor:isometric_equivariance}
Let \(T\in\mathbb{R}_{+}^{n\times m}\) be a fixed transport plan with
\(T\mathbf{1}_m=a\), and let
\[
\widetilde Y
=
\mathcal{B}_{T}(Z)
=
\operatorname{Diag}(a)^{-1}TZ.
\]
For any orthogonal matrix \(Q\in\mathbb{R}^{d\times d}\) and translation
vector \(c\in\mathbb{R}^{d}\), consider the isometrically transformed
latent support
\[
Z'
=
ZQ+\mathbf{1}_m c^\top.
\]
Then its barycentric projection satisfies
\[
\mathcal{B}_{T}(Z')
=
\widetilde YQ+\mathbf{1}_n c^\top.
\]
Consequently, the projected representation undergoes the same global
rotation, reflection, or translation, while all pairwise Euclidean
distances remain unchanged:
\[
D_{\mathcal{B}_{T}(Z')}
=
D_{\mathcal{B}_{T}(Z)}.
\]
\end{proposition}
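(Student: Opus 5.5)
The plan is a direct computation from the definition of $\mathcal{B}_T$, using linearity in $Z$ and the row-sum constraint $T\mathbf{1}_m=a$. We implicitly assume $a_i>0$ for every $i$, as in Proposition~\ref{prop:barycentric_optimality}, so that $\operatorname{Diag}(a)^{-1}$ is defined.

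First, substitute $Z'=ZQ+\mathbf{1}_m c^\top$ into the definition and distribute:
\[
\mathcal{B}_T(Z')=\operatorname{Diag}(a)^{-1}TZQ+\operatorname{Diag}(a)^{-1}T\mathbf{1}_m c^\top .
\]
The first term is $\widetilde Y Q$ by associativity of matrix multiplication. For the second term, use $T\mathbf{1}_m=a$ together with $\operatorname{Diag}(a)^{-1}a=\mathbf{1}_n$. This gives $\mathbf{1}_n c^\top$, which establishes $\mathcal{B}_T(Z')=\widetilde YQ+\mathbf{1}_nc^\top$.

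Second, the distance invariance follows row by row. The $i$-th row of $\mathcal{B}_T(Z')$ is $Q^\top\widetilde y_i+c$ written as a column vector. Hence for any pair $i,i'$,
\[
\lVert (Q^\top\widetilde y_i+c)-(Q^\top\widetilde y_{i'}+c)\rVert_2=\lVert Q^\top(\widetilde y_i-\widetilde y_{i'})\rVert_2=\lVert\widetilde y_i-\widetilde y_{i'}\rVert_2 ,
\]
since $Q$ orthogonal implies $QQ^\top=I_d$, so $Q^\top$ preserves Euclidean norms. Therefore $D_{\mathcal{B}_T(Z')}=D_{\mathcal{B}_T(Z)}$ entrywise.

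There is no real obstacle here. The only step needing care is the translation term: the cancellation relies on the row marginal of $T$ being exactly $a$. This is why the hypothesis is stated as $T\mathbf{1}_m=a$ rather than full membership in $\Pi(a,b)$; the column marginal is never used.
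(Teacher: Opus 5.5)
Your proof is correct and matches the paper's argument: distribute $\operatorname{Diag}(a)^{-1}T$ over $ZQ+\mathbf{1}_m c^\top$, collapse the translation term using $T\mathbf{1}_m=a$, then use $QQ^\top=I$ to show that pairwise distances are unchanged. Your explicit note that $a_i>0$ is required for $\operatorname{Diag}(a)^{-1}$ to exist is a fair point, which the paper's statement of this proposition leaves implicit.
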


\begin{proof}
Using \(T\mathbf{1}_m=a\), we obtain
\begin{align*}
\mathcal{B}_{T}(Z')
&=
\operatorname{Diag}(a)^{-1}
T\left(ZQ+\mathbf{1}_m c^\top\right) \\
&=
\operatorname{Diag}(a)^{-1}TZQ
+
\operatorname{Diag}(a)^{-1}
T\mathbf{1}_m c^\top \\
&=
\mathcal{B}_{T}(Z)Q
+
\operatorname{Diag}(a)^{-1}a\,c^\top \\
&=
\widetilde YQ+\mathbf{1}_n c^\top.
\end{align*}

Let \(\widetilde y_i\) and \(\widetilde y_j\) denote two rows of
\(\widetilde Y\), and let \(\widetilde y_i'\) and
\(\widetilde y_j'\) denote the corresponding rows of
\(\mathcal{B}_{T}(Z')\). Then
\begin{align*}
\left\|
\widetilde y_i'-\widetilde y_j'
\right\|_2^2
&=
\left\|
\left(\widetilde y_iQ+c^\top\right)
-
\left(\widetilde y_jQ+c^\top\right)
\right\|_2^2 \\
&=
\left\|
(\widetilde y_i-\widetilde y_j)Q
\right\|_2^2 \\
&=
(\widetilde y_i-\widetilde y_j)
QQ^\top
(\widetilde y_i-\widetilde y_j)^\top \\
&=
\left\|
\widetilde y_i-\widetilde y_j
\right\|_2^2,
\end{align*}
where the last equality follows from the orthogonality of \(Q\),
that is, \(QQ^\top=I\). Therefore, every pairwise Euclidean distance
is preserved, and hence
\[
D_{\mathcal{B}_{T}(Z')}
=
D_{\mathcal{B}_{T}(Z)}.
\]
\end{proof}

Proposition~\ref{cor:isometric_equivariance} clarifies the scope of the
uniqueness established in Proposition~\ref{prop:barycentric_optimality}.
Although the sample-indexed coordinates depend on the coordinate realization
selected by the teacher, globally isometric realizations yield barycentric
targets related by the same isometry and therefore preserve exactly the same
pairwise relational geometry.

\begin{proposition}[Control of relational distortion by distillation error]
\label{prop:distillation_relational_control}
Let
\(\widetilde Y=(\widetilde y_1,\ldots,\widetilde y_n)^\top\)
denote the barycentric targets and let
\(\widehat Y=(\widehat y_1,\ldots,\widehat y_n)^\top\)
be the corresponding student predictions. For weights
\(a\in\Delta_n\), define the weighted distillation error as
\[
\mathcal{L}_{\mathrm{dist}}
=
\sum_{i=1}^{n}
a_i
\left\|
\widehat y_i-\widetilde y_i
\right\|_2^2.
\]
Then the induced pairwise Euclidean distances satisfy
\[
\sum_{i,j=1}^{n}
a_i a_j
\left(
\left\|
\widehat y_i-\widehat y_j
\right\|_2
-
\left\|
\widetilde y_i-\widetilde y_j
\right\|_2
\right)^2
\leq
4\mathcal{L}_{\mathrm{dist}}.
\]
\end{proposition}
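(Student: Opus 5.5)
The plan is a pairwise triangle-inequality estimate followed by a weighted sum. Write \(e_i=\widehat y_i-\widetilde y_i\), so that \(\mathcal{L}_{\mathrm{dist}}=\sum_i a_i\lVert e_i\rVert_2^2\).

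For each pair \((i,j)\), the reverse triangle inequality for the Euclidean norm gives
\[
\left|\,\lVert \widehat y_i-\widehat y_j\rVert_2-\lVert \widetilde y_i-\widetilde y_j\rVert_2\,\right|
\le \lVert (\widehat y_i-\widehat y_j)-(\widetilde y_i-\widetilde y_j)\rVert_2
=\lVert e_i-e_j\rVert_2
\le \lVert e_i\rVert_2+\lVert e_j\rVert_2 .
\]
Squaring and using \((s+t)^2\le 2s^2+2t^2\) yields the pointwise bound
\[
\left(\lVert \widehat y_i-\widehat y_j\rVert_2-\lVert \widetilde y_i-\widetilde y_j\rVert_2\right)^2\le 2\lVert e_i\rVert_2^2+2\lVert e_j\rVert_2^2 .
\]

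Next, multiply this bound by \(a_ia_j\ge 0\) and sum over all \(i,j\). Because \(a\in\Delta_n\), we have \(\sum_j a_j=1\). The term \(2\sum_{i,j}a_ia_j\lVert e_i\rVert_2^2\) therefore collapses to \(2\mathcal{L}_{\mathrm{dist}}\), and by symmetry the term in \(e_j\) also equals \(2\mathcal{L}_{\mathrm{dist}}\). Adding the two gives the claimed bound \(4\mathcal{L}_{\mathrm{dist}}\).

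There is no genuine obstacle here; the only point requiring care is the first step. One must bound the difference of norms by the norm of the difference, rather than attempting to expand squared distances, which would introduce uncontrolled cross terms. A sharper constant could be obtained by expanding \(\lVert e_i-e_j\rVert_2^2\) exactly, but that is unnecessary. With uniform weights \(a_i=1/n\), \(\mathcal{L}_{\mathrm{dist}}\) coincides with the training objective \(\mathcal{L}_{\mathrm{distill}}(\theta)\), which justifies the remark in Section~\ref{sec:single_inductive}.
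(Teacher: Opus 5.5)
Your proof is correct and follows the paper's argument essentially step for step. Both use the reverse triangle inequality to reach $\|e_i-e_j\|_2$, then $\|e_i-e_j\|_2\le\|e_i\|_2+\|e_j\|_2$ and $(s+t)^2\le 2s^2+2t^2$, and finally collapse the weighted double sum via $\sum_j a_j=1$. Your side remark is also right: summing $a_ia_j\|e_i-e_j\|_2^2$ directly gives $2\mathcal{L}_{\mathrm{dist}}-2\|\sum_i a_i e_i\|_2^2$, so the constant $4$ can be improved to $2$.
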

\begin{proof}
For each sample \(i\), define the student prediction error as
\[
e_i=\widehat y_i-\widetilde y_i.
\Rightarrow 
\widehat y_i=\widetilde y_i+e_i.
\]

Fix an arbitrary pair of indices \(i,j\). We first compare the
student-induced distance with the corresponding distance between the
barycentric targets. Recall that the reverse triangle inequality states
that, for any vectors \(u\) and \(v\),
\[
\bigl|\|u\|_2-\|v\|_2\bigr|
\leq
\|u-v\|_2.
\]
Applying this inequality with
\(
u=\widehat y_i-\widehat y_j\)
and
\(v=\widetilde y_i-\widetilde y_j,
\)
we obtain
\begin{align*}
\left|
\left\|
\widehat y_i-\widehat y_j
\right\|_2
-
\left\|
\widetilde y_i-\widetilde y_j
\right\|_2
\right|
\leq
\left\|
(\widehat y_i-\widehat y_j)
-
(\widetilde y_i-\widetilde y_j)
\right\|_2.
\end{align*}
The vector inside the norm can be rewritten as
\begin{align*}
(\widehat y_i-\widehat y_j)
-
(\widetilde y_i-\widetilde y_j)
&=(\widehat y_i-\widetilde y_i)
-
(\widehat y_j-\widetilde y_j)
\\
&=e_i-e_j.
\end{align*}
Therefore,
\[
\left|
\left\|
\widehat y_i-\widehat y_j
\right\|_2
-
\left\|
\widetilde y_i-\widetilde y_j
\right\|_2
\right|
\leq
\|e_i-e_j\|_2.
\]
Using the triangle inequality once more gives
\[
\|e_i-e_j\|_2
=
\|e_i+(-e_j)\|_2
\leq
\|e_i\|_2+\|e_j\|_2.
\]
Combining the preceding inequalities yields
\begin{equation}
\label{eq:pairwise-error-bound}
\left|
\left\|
\widehat y_i-\widehat y_j
\right\|_2
-
\left\|
\widetilde y_i-\widetilde y_j
\right\|_2
\right|
\leq
\|e_i\|_2+\|e_j\|_2.
\end{equation}

Squaring both sides of~\eqref{eq:pairwise-error-bound}, and using
\[
(r+s)^2
=
r^2+2rs+s^2
\leq
2r^2+2s^2,
\]
which follows from \(2rs\leq r^2+s^2\), we obtain
\begin{align*}
\left(
\left\|
\widehat y_i-\widehat y_j
\right\|_2
-
\left\|
\widetilde y_i-\widetilde y_j
\right\|_2
\right)^2
&\leq
\left(
\|e_i\|_2+\|e_j\|_2
\right)^2
\\
&\leq
2\|e_i\|_2^2+2\|e_j\|_2^2.
\end{align*}

Since \(a_i,a_j\geq 0\), multiplying both sides by \(a_i a_j\)
preserves the inequality. Summing over all pairs \(i,j\) gives
\begin{align*}
\sum_{i,j=1}^{n}
a_i a_j
\left(
\left\|
\widehat y_i-\widehat y_j
\right\|_2
-
\left\|
\widetilde y_i-\widetilde y_j
\right\|_2
\right)^2
\leq
2\sum_{i,j=1}^{n}
a_i a_j\|e_i\|_2^2
+
2\sum_{i,j=1}^{n}
a_i a_j\|e_j\|_2^2.
\end{align*}

We now simplify the two terms on the right-hand side. In the first
term, \(\|e_i\|_2^2\) does not depend on \(j\). Hence,
\begin{align*}
2\sum_{i,j=1}^{n}
a_i a_j\|e_i\|_2^2
&=
2\sum_{i=1}^{n}
a_i\|e_i\|_2^2
\left(
\sum_{j=1}^{n}a_j
\right) \\
&=
2\sum_{i=1}^{n}
a_i\|e_i\|_2^2,
\end{align*}
where we used \(\sum_{j=1}^{n}a_j=1\). Similarly, in the second
term, \(\|e_j\|_2^2\) does not depend on \(i\), so
\begin{align*}
2\sum_{i,j=1}^{n}
a_i a_j\|e_j\|_2^2
&=
2\left(
\sum_{i=1}^{n}a_i
\right)
\sum_{j=1}^{n}
a_j\|e_j\|_2^2 \\
&=
2\sum_{j=1}^{n}
a_j\|e_j\|_2^2.
\end{align*}

The index used in a finite sum is arbitrary. Therefore,
\[
\sum_{j=1}^{n}a_j\|e_j\|_2^2
=
\sum_{i=1}^{n}a_i\|e_i\|_2^2.
\]
Consequently,
\begin{align*}
\sum_{i,j=1}^{n}
a_i a_j
\left(
\left\|
\widehat y_i-\widehat y_j
\right\|_2
-
\left\|
\widetilde y_i-\widetilde y_j
\right\|_2
\right)^2
\leq
4\sum_{i=1}^{n}
a_i\|e_i\|_2^2.
\end{align*}
Finally, by the definition of the weighted distillation loss,
\[
\mathcal{L}_{\mathrm{dist}}
=
\sum_{i=1}^{n}
a_i
\left\|
\widehat y_i-\widetilde y_i
\right\|_2^2
=
\sum_{i=1}^{n}a_i\|e_i\|_2^2.
\]
Thus,
\[
\sum_{i,j=1}^{n}
a_i a_j
\left(
\left\|
\widehat y_i-\widehat y_j
\right\|_2
-
\left\|
\widetilde y_i-\widetilde y_j
\right\|_2
\right)^2
\leq
4\mathcal{L}_{\mathrm{dist}},
\]
which proves the result.
\end{proof}

For the uniform empirical measures considered in this work,
\(a_i=1/n\), Proposition~\ref{prop:distillation_relational_control}
specializes to
\[
\frac{1}{n^2}
\left\|
D_{\widehat Y}-D_{\widetilde Y}
\right\|_F^2
\leq
\frac{4}{n}
\left\|
\widehat Y-\widetilde Y
\right\|_F^2.
\]
Therefore, the mean-squared pointwise distillation error controls an
upper bound on the average discrepancy between the pairwise Euclidean
distance matrices induced by the student predictions and the barycentric
targets.

\subsection{Algorithmic Summary}

\begin{algorithm}[h]
\caption{Multi-view barycentric distillation}
\label{alg:multiview_barycentric_distillation}
\begin{algorithmic}[1]
\Require Corresponding training views
\(\{X^{(v)}\}_{v=1}^{V}\), view weights
\(\{\lambda_v\}_{v=1}^{V}\), embedding dimension \(d\),
teacher type \(q\in\{\mathrm{Mean},\mathrm{Multi}\}\), and
aggregation criterion \(\operatorname{Agg}\)
\Ensure Inductive mapping \(f_{\theta^\star}\)

\State Fit and apply all preprocessing transformations using training data only
\For{\(v=1,\ldots,V\)}
    \State Construct and normalize the relational matrix \(D_X^{(v)}\)
\EndFor

\If{\(q=\mathrm{Mean}\)}
    \State \(\overline D_X
        \gets \sum_{v=1}^{V}\lambda_v D_X^{(v)}\)
    \State \((Z^\star,T^\star)
        \gets \Call{GW\text{-}MDS}{\overline D_X,d}\)
    \State \(\widetilde Y
        \gets \mathcal B_{T^\star}(Z^\star)\)
\Else
    \State \((Z^\star,\{T^{(v)\star}\}_{v=1}^{V})
        \gets
        \Call{Multi\text{-}GWMDS}
        {\{D_X^{(v)}\}_{v=1}^{V},
         \{\lambda_v\}_{v=1}^{V},d}\)

    \For{\(v=1,\ldots,V\)}
        \State \(\widetilde Y^{(v)}
            \gets \mathcal B_{T^{(v)\star}}(Z^\star)\)
        \State \(s_v
            \gets
            \operatorname{Agg}_{u=1,\ldots,V}
            \rho_{\mathrm P}
            \left(
                D_{\widetilde Y^{(v)}},
                D_X^{(u)}
            \right)\)
    \EndFor

    \State \(v^\star
        \gets
        \arg\max_{v\in\{1,\ldots,V\}}s_v\)
    \State \(\widetilde Y
        \gets \widetilde Y^{(v^\star)}\)
\EndIf

\State Split the training indices into fitting and validation subsets
\State Standardize \(\widetilde Y\) using statistics from the fitting subset
\State Train \(f_\theta\) on the fitting subset by minimizing
\[
    \mathcal L_{\mathrm{distill}}(\theta)
    =
    \sum_{i\in\mathcal I_{\mathrm{fit}}}
    \left\|
        f_\theta\!\left(
            x_i^{(1)},\ldots,x_i^{(V)}
        \right)
        -
        \widetilde y_i
    \right\|_2^2
\]
\State Set \(\theta^\star\) to the checkpoint with the lowest validation loss
\State \Return \(f_{\theta^\star}\)

\end{algorithmic}
\end{algorithm}

When the Multi-GWMDS teacher produces multiple view-dependent
barycentric projections and a single representation is required, the
candidate projections are evaluated using the Pearson correlation
between their induced distance matrices and the relational structures
of the training views. The resulting view-wise correlations are
combined using a prespecified aggregation criterion:
\begin{equation}
    s_v
    =
    \operatorname{Agg}_{u=1,\ldots,V}
    \rho_{\mathrm{P}}
    \left(
        D_{\widetilde Y^{(v)}},
        D_X^{(u)}
    \right),
    \qquad
    v^\star
    =
    \argmax{v\in\{1,\ldots,V\}} s_v,
    \label{eq:projection_selection}
\end{equation}
where \(\rho_{\mathrm{P}}\) denotes the Pearson correlation computed
between the off-diagonal entries of the two distance matrices, and
\(\operatorname{Agg}\) may be the arithmetic mean, the median, or the
minimum. Choosing the minimum yields a maximin selection rule, since
the selected candidate maximizes its worst agreement across the views.
In our experiments, \(\operatorname{Agg}\) is instantiated as the
arithmetic mean.

For \(V=1\), Algorithm~\ref{alg:multiview_barycentric_distillation}
recovers the single-view barycentric distillation formulation. At
inference time, the preprocessing transformations fitted on the
training data are first applied to the unseen observation. Its
embedding is then obtained directly through a forward pass of the
trained student, without constructing test relational matrices,
estimating test-set couplings, or solving additional GW problems.

\subsection{Additional Experimental Analysis on ERA5}
\label{app:era5_analysis}

\paragraph{Teacher--student transfer.}
Table~\ref{tab:era5_teacher_student_transfer} confirms that barycentric
distillation closely reproduces the relational behavior of both
teachers on the training observations. For Euclidean and geodesic
relations, the weighted Pearson correlation of the Mean-GWMDS student
differs from that of its teacher by at most \(0.0002\). The largest
difference occurs under cosine dissimilarity, for which the correlation
decreases from \(0.5175\) to \(0.4960\), while Spearman correlation,
trustworthiness, and stress remain close to their teacher values.
The selected-projection student is similarly faithful: across all
geometries and metrics, its largest difference from the corresponding
Multi-GWMDS teacher is \(0.0081\).

The test-set results remain comparable to the corresponding training
values and exhibit no systematic deterioration in the global
correlations or stress. The occasionally higher test correlations
should not be interpreted as an improvement over the training
representations, since the metrics are computed from different sets of
pairwise relations: \(386\) training locations and \(97\) held-out
locations. Nevertheless, their consistency indicates that the student
does not merely memorize the barycentric targets and can transfer their
relational organization to unseen locations.

\begin{table*}[t]
\centering
\caption{Teacher-to-student transfer on ERA5. All metrics are uniformly
averaged over the four views. Teacher representations are available only
for the training observations.}
\label{tab:era5_teacher_student_transfer}
\small
\setlength{\tabcolsep}{4.2pt}
\begin{tabular}{lllcccc}
\toprule
Geometry & Representation & Split
& \(r\uparrow\) & \(\rho\uparrow\)
& Trust.\(\uparrow\) & Stress\(\downarrow\) \\
\midrule
Euclidean
& Mean-GWMDS teacher & Train
& 0.7661 & 0.7640 & 0.9065 & 0.2914 \\
& Ind. Mean-GWMDS & Train
& 0.7661 & 0.7638 & 0.9050 & 0.2915 \\
& Ind. Mean-GWMDS & Test
& 0.7853 & 0.7958 & 0.8975 & 0.2798 \\
& Multi-GWMDS teacher, selected & Train
& 0.7009 & 0.6977 & 0.8617 & 0.3388 \\
& Ind. Multi-GWMDS & Train
& 0.7070 & 0.7025 & 0.8653 & 0.3357 \\
& Ind. Multi-GWMDS & Test
& 0.7470 & 0.7572 & 0.8643 & 0.3137 \\
\midrule
Geodesic
& Mean-GWMDS teacher & Train
& 0.8139 & 0.8012 & 0.9531 & 0.2761 \\
& Ind. Mean-GWMDS & Train
& 0.8137 & 0.8010 & 0.9531 & 0.2762 \\
& Ind. Mean-GWMDS & Test
& 0.8507 & 0.8439 & 0.9373 & 0.2603 \\
& Multi-GWMDS teacher, selected & Train
& 0.7357 & 0.7233 & 0.9105 & 0.3334 \\
& Ind. Multi-GWMDS & Train
& 0.7423 & 0.7314 & 0.9151 & 0.3306 \\
& Ind. Multi-GWMDS & Test
& 0.7855 & 0.7826 & 0.9049 & 0.3106 \\
\midrule
Cosine
& Mean-GWMDS teacher & Train
& 0.5175 & 0.6326 & 0.9129 & 0.5743 \\
& Ind. Mean-GWMDS & Train
& 0.4960 & 0.6265 & 0.9121 & 0.5757 \\
& Ind. Mean-GWMDS & Test
& 0.5394 & 0.6582 & 0.8736 & 0.5641 \\
& Multi-GWMDS teacher, selected & Train
& 0.4550 & 0.5539 & 0.8515 & 0.5878 \\
& Ind. Multi-GWMDS & Train
& 0.4594 & 0.5573 & 0.8513 & 0.5869 \\
& Ind. Multi-GWMDS & Test
& 0.4830 & 0.5875 & 0.8322 & 0.5712 \\
\bottomrule
\end{tabular}
\end{table*}

\begin{table*}[t]
\centering
\caption{View-wise out-of-sample correlations on ERA5. Each entry reports
Pearson/Spearman correlation, \(r/\rho\).}
\label{tab:era5_viewwise_correlations}
\small
\setlength{\tabcolsep}{4pt}
\begin{tabular}{llcccc}
\toprule
Geometry & Method
& Temperature & Dewpoint & Pressure & Precipitation \\
\midrule
Euclidean
& Ind. Mean-GWMDS
& 0.8995/0.9057 & 0.9232/0.9186
& 0.8981/0.8811 & 0.4203/0.4777 \\
& Ind. Multi-GWMDS
& 0.9210/0.9291 & 0.8239/0.8187
& 0.8444/0.8400 & 0.3988/0.4411 \\
\midrule
Geodesic
& Ind. Mean-GWMDS
& 0.9194/0.9290 & 0.9045/0.9033
& 0.8767/0.8565 & 0.7021/0.6867 \\
& Ind. Multi-GWMDS
& 0.9393/0.9313 & 0.7821/0.7950
& 0.7772/0.7626 & 0.6436/0.6416 \\
\midrule
Cosine
& Ind. Mean-GWMDS
& 0.7575/0.7770 & 0.5337/0.5589
& 0.2667/0.6706 & 0.5995/0.6263 \\
& Ind. Multi-GWMDS
& 0.8967/0.8935 & 0.2920/0.3448
& 0.2612/0.6197 & 0.4821/0.4919 \\
\bottomrule
\end{tabular}
\end{table*}

\begin{table}[t]
\centering
\caption{Training-only selection scores for the four Multi-GWMDS
barycentric projections. Each score is the mean Pearson correlation
across the four ERA5 relational matrices.}
\label{tab:era5_projection_selection}
\small
\begin{tabular}{lccc}
\toprule
Projection & Euclidean & Geodesic & Cosine \\
\midrule
View 1: temperature
& \textbf{0.7009} & \textbf{0.7357} & \textbf{0.4550} \\
View 2: dewpoint
& 0.6948 & 0.6995 & 0.3762 \\
View 3: pressure
& 0.6570 & 0.6780 & 0.4010 \\
View 4: precipitation
& 0.5626 & 0.6453 & 0.2993 \\
\bottomrule
\end{tabular}
\end{table}

\begin{table}[t]
\centering
\caption{Last recorded training objective after \(100\) outer
iterations and out-of-sample Pearson correlation. The objective values
are uniformly averaged over the four views.}
\label{tab:era5_objective_mismatch}
\small
\begin{tabular}{lcccc}
\toprule
& \multicolumn{2}{c}{GW objective}
& \multicolumn{2}{c}{Test \(r\)} \\
\cmidrule(lr){2-3}
\cmidrule(lr){4-5}
Geometry & Teacher & Direct
& Ind. Multi & Direct \\
\midrule
Euclidean & 0.01552 & 0.01565 & 0.7470 & 0.4576 \\
Geodesic  & 0.01176 & 0.01169 & 0.7855 & 0.3409 \\
Cosine    & 0.10290 & 0.10328 & 0.4830 & 0.4416 \\
\bottomrule
\end{tabular}
\end{table}

\paragraph{View-wise behavior.}
Table~\ref{tab:era5_viewwise_correlations} shows that the advantage of geodesic relations is particularly pronounced for total
precipitation. For Inductive Mean-GWMDS, its Pearson correlation
increases from \(0.4203\) under Euclidean relations to \(0.7021\) under geodesic relations. This improvement explains a substantial part
of the superior view-averaged geodesic result. Under cosine
dissimilarity, surface pressure instead exhibits a large discrepancy
between Pearson and Spearman correlation, \(0.2667\) versus \(0.6706\).
Thus, its relational ordering is partially preserved, whereas the
magnitudes of its pairwise dissimilarities are poorly reproduced.

The selected Multi-GWMDS projection remains more strongly associated
with temperature. Under geodesic relations, it improves the
temperature correlation from \(0.9194\) to \(0.9393\) relative to
Mean-GWMDS, but decreases the correlations for dewpoint temperature,
surface pressure, and precipitation. Mean-GWMDS therefore provides a
more balanced consensus by accepting a small reduction in
temperature-view fidelity.

\paragraph{Projection selection and objective ambiguity.}
As reported in Table~\ref{tab:era5_projection_selection}, the
temperature-induced projection is selected for all three relational
geometries. Its Euclidean score exceeds that of the dewpoint projection
by only \(0.0061\), however, whereas the corresponding margins are
\(0.0362\) and \(0.0540\) for geodesic and cosine relations.

Finally, Table~\ref{tab:era5_objective_mismatch} shows that the final
objectives of the Multi-GWMDS teacher and direct neural GW differ by
less than \(1\%\) under every geometry. For geodesic relations,
the direct model even attains a slightly lower objective while producing
a substantially smaller test correlation. The result confirms that GW agreement under an
optimized coupling does not determine the sample-indexed
correspondence required for prediction.

\subsubsection{Computational Cost}
\label{app:computational_cost}

\begin{table}[h]
\centering
\caption{ERA5 execution times in seconds for a single run with seed
\(0\). Timings are implementation- and hardware-dependent.}
\label{tab:era5_runtime}
\small
\begin{tabular}{lrrrr}
\toprule
Stage & Euclidean & Geodesic & Cosine & Mean \\
\midrule
Mean-GWMDS teacher
& 76.45 & 80.58 & 82.00 & 79.68 \\
Multi-GWMDS teacher
& 238.00 & 229.60 & 244.23 & 237.28 \\
Mean student
& 1.97 & 2.18 & 1.34 & 1.83 \\
Selected-projection student
& 1.39 & 1.23 & 1.41 & 1.34 \\
Direct multi-view GW
& 193.72 & 192.84 & 240.32 & 208.96 \\
\bottomrule
\end{tabular}
\end{table}

Let \(n\) denote the number of training observations, \(V\) the number
of views, \(K_T\) the number of teacher iterations, and
\(\mathcal C_{\mathrm{GW}}(n)\) the cost of one dense GW-plan update,
including its conditional-gradient and linear optimal-transport
subproblems. Mean-GWMDS performs one such update per outer iteration,
leading to the dominant cost
\[
    \mathcal{O}\left(K_T\mathcal{C}_{\mathrm{GW}}(n)\right),
\]
whereas Multi-GWMDS performs one update for each view:
\[
    \mathcal O\!\left(
        V K_T\mathcal C_{\mathrm{GW}}(n)
    \right).
\]
The final barycentric projections require
\(\mathcal O(Vn^2d)\) operations. Storing the relational matrices and
transport plans requires \(\mathcal O(Vn^2)\) memory, which constitutes
the principal scalability limitation of the teacher stage.

Let \(P_\theta\) denote the effective cost of a forward and backward
pass through the student network. Full-batch distillation over \(K_S\)
epochs has cost approximately
\(\mathcal O(K_S nP_\theta)\), without additional GW problems.
After training, embedding \(m\) unseen observations requires only
\(\mathcal O(mP_\theta)\) operations and does not require storing an
\(m\times m\) relational matrix.

The timings in Table~\ref{tab:era5_runtime} agree with this analysis.
Student distillation takes less than \(3\%\) of its teacher's runtime.
The complete Inductive Mean-GWMDS pipeline is approximately
\(2.3\)--\(2.9\) times faster than direct multi-view GW in these
experiments. Multi-GWMDS and direct multi-view GW have comparable
training times because both repeatedly solve one GW problem per view.
The advantage of distillation therefore lies primarily in the
reusability of the teacher targets and in inexpensive out-of-sample
inference.

We compute the unregularized squared-loss GW transport plans using
the conditional-gradient solver provided by the POT library
\citep{flamary2021pot}. For this solver,
\(\mathcal C_{\mathrm{GW}}(n)=\mathcal O(K_{\mathrm{GW}}n^3)\), where
\(K_{\mathrm{GW}}\leq 100\) is the number of inner conditional-gradient
iterations; hence, for a fixed iteration budget, GW computation scales
cubically with \(n\)~\citep{peyre2016gromov}.

\subsubsection{Qualitative Results and Optimization Behavior}
\label{app:qualitative_analysis}

\begin{figure*}[h]
    \centering

    \begin{subfigure}[t]{0.32\textwidth}
        \centering
        \includegraphics[width=\linewidth]{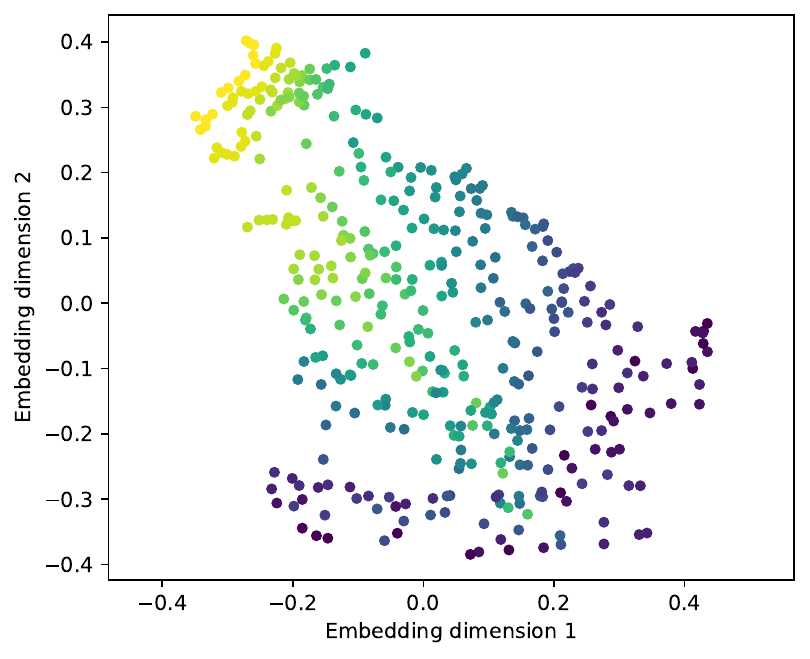}
        \caption{Mean-GWMDS consensus.}
        \label{fig:era5_mean_teacher}
    \end{subfigure}
    \hfill
    \begin{subfigure}[t]{0.32\textwidth}
        \centering
        \includegraphics[width=\linewidth]{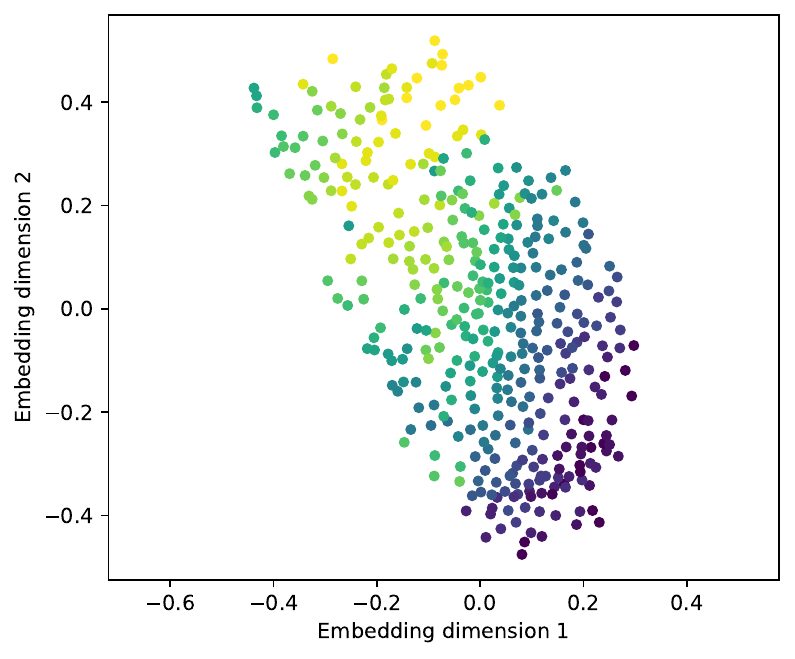}
        \caption{Temperature projection.}
        \label{fig:era5_multi_projection_v1}
    \end{subfigure}
    \hfill
    \begin{subfigure}[t]{0.32\textwidth}
        \centering
        \includegraphics[width=\linewidth]{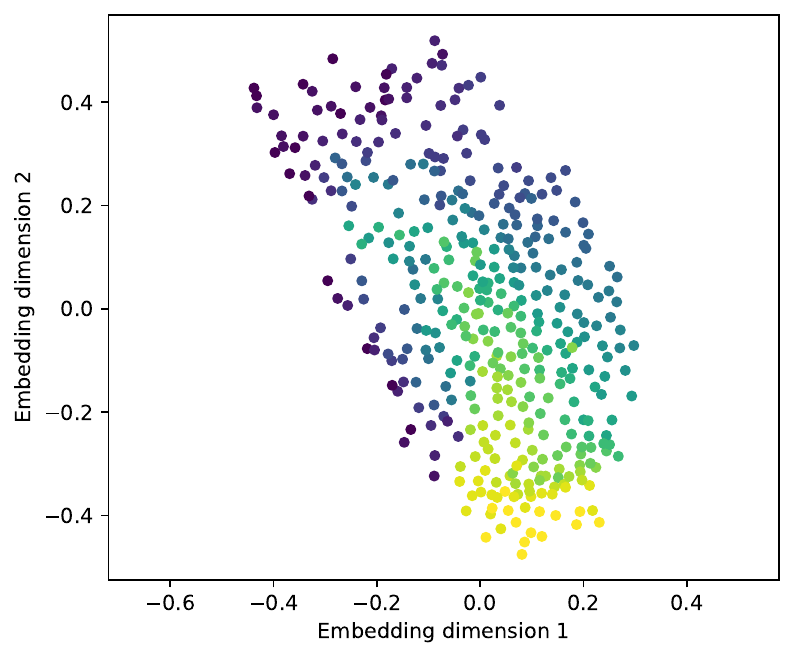}
        \caption{Dewpoint projection.}
        \label{fig:era5_multi_projection_v2}
    \end{subfigure}

    \vspace{0.8em}

    \begin{subfigure}[t]{0.32\textwidth}
        \centering
        \includegraphics[width=\linewidth]{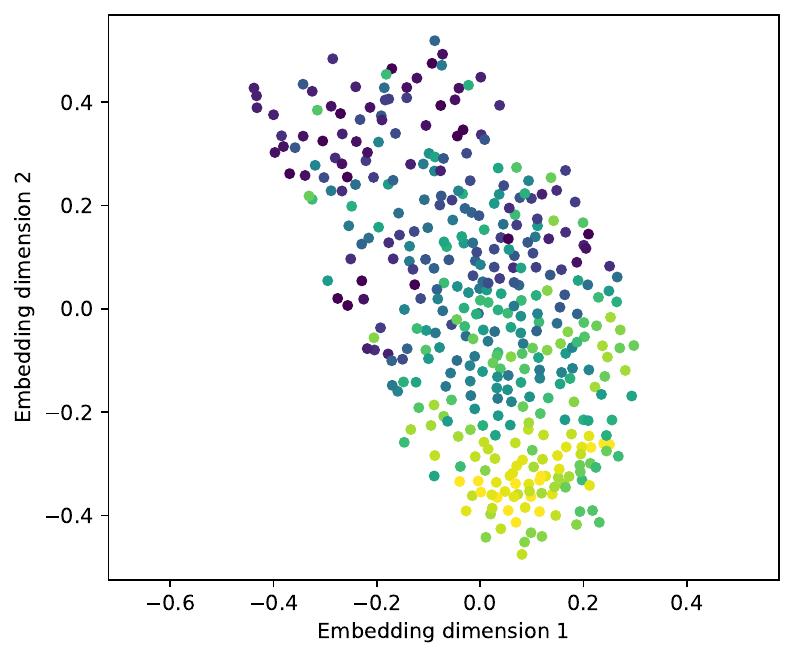}
        \caption{Surface-pressure projection.}
        \label{fig:era5_multi_projection_v3}
    \end{subfigure}
    \hspace{0.025\textwidth}
    \begin{subfigure}[t]{0.32\textwidth}
        \centering
        \includegraphics[width=\linewidth]{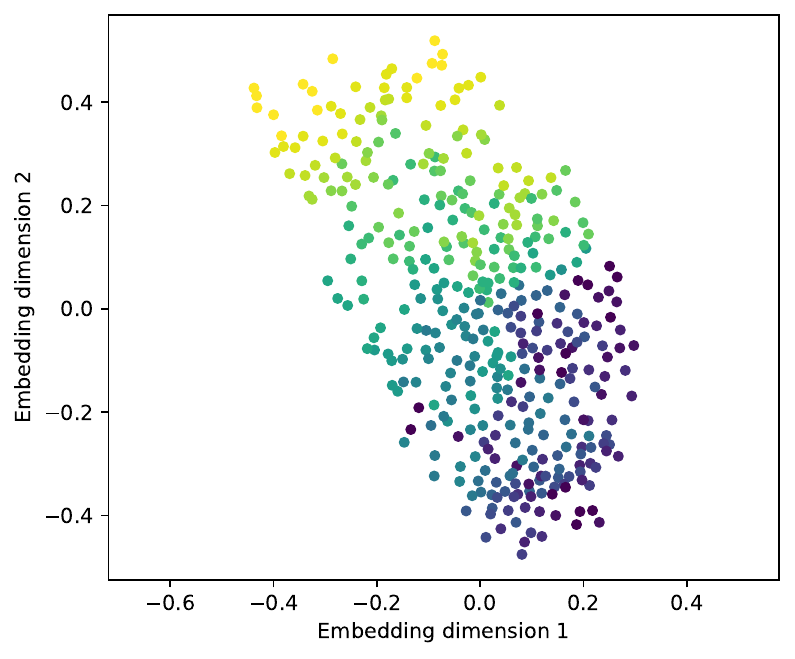}
        \caption{Total-precipitation projection.}
        \label{fig:era5_multi_projection_v4}
    \end{subfigure}

    \caption{Geodesic teacher representations for ERA5. }
    \label{fig:era5_geodesic_teachers}
\end{figure*}

We complement the quantitative results with a qualitative examination of
the teacher and out-of-sample representations, followed by an empirical
analysis of the optimization behavior of the proposed methods and the
direct neural GW baseline.

\paragraph{Teacher representations and barycentric projections.}
Figure~\ref{fig:era5_geodesic_teachers} illustrates the representations
obtained from the geodesic ERA5 relations. The Mean-GWMDS
barycentric target exhibits a consensus organization that combines the
relational information provided by the four meteorological variables. In
contrast, the Multi-GWMDS projections are generated from the same latent
support but through different view-dependent transport plans. They
therefore differ in their sample-indexed organization, demonstrating that
a shared latent geometry does not imply a unique correspondence with the original observations.

The temperature-induced projection provides the highest average
training correlation and is consequently selected for distillation.
Nevertheless, this projection favors the relational structure of
temperature, whereas the Mean-GWMDS target provides a more balanced
compromise across temperature, dewpoint temperature, surface pressure,
and total precipitation. Differences in global orientation should not be
interpreted as errors, since GW-based objectives are invariant to
rotations and reflections. The relevant distinction lies in the local
organization of the observations and in the correspondence between latent points and sample indices.

\paragraph{Out-of-sample representations.}

\begin{figure*}[h]
    \centering

    \begin{subfigure}[t]{0.48\textwidth}
        \centering
        \includegraphics[width=\linewidth]{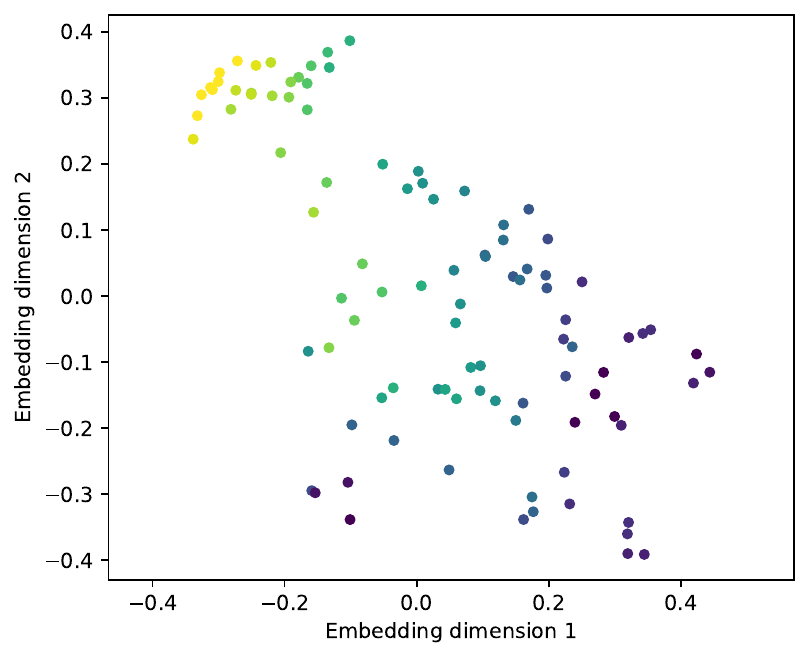}
        \caption{Inductive Mean-GWMDS.}
        \label{fig:era5_test_mean_gwmds}
    \end{subfigure}
    \hfill
    \begin{subfigure}[t]{0.48\textwidth}
        \centering
        \includegraphics[width=\linewidth]{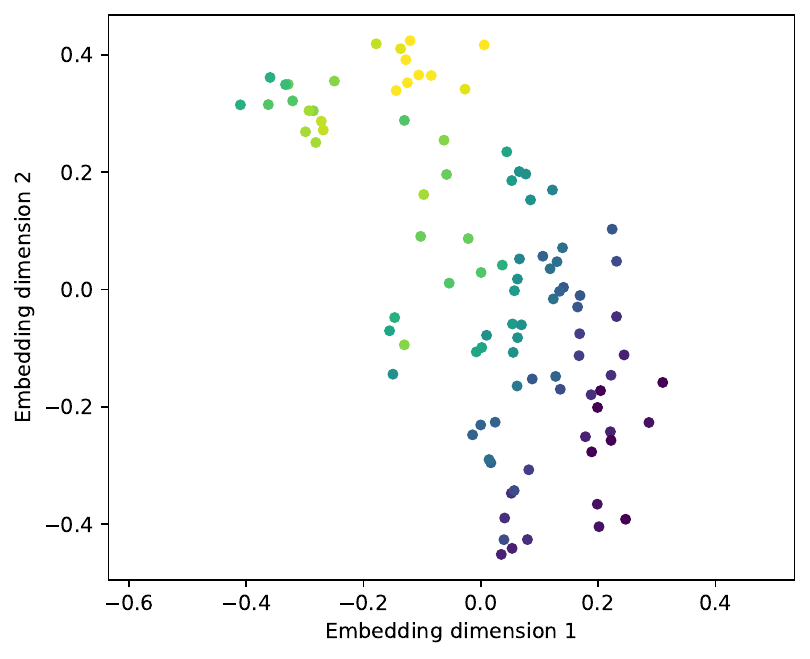}
        \caption{Inductive Multi-GWMDS.}
        \label{fig:era5_test_multi_gwmds}
    \end{subfigure}

    \vspace{0.8em}

    \begin{subfigure}[t]{0.48\textwidth}
        \centering
        \includegraphics[width=\linewidth]{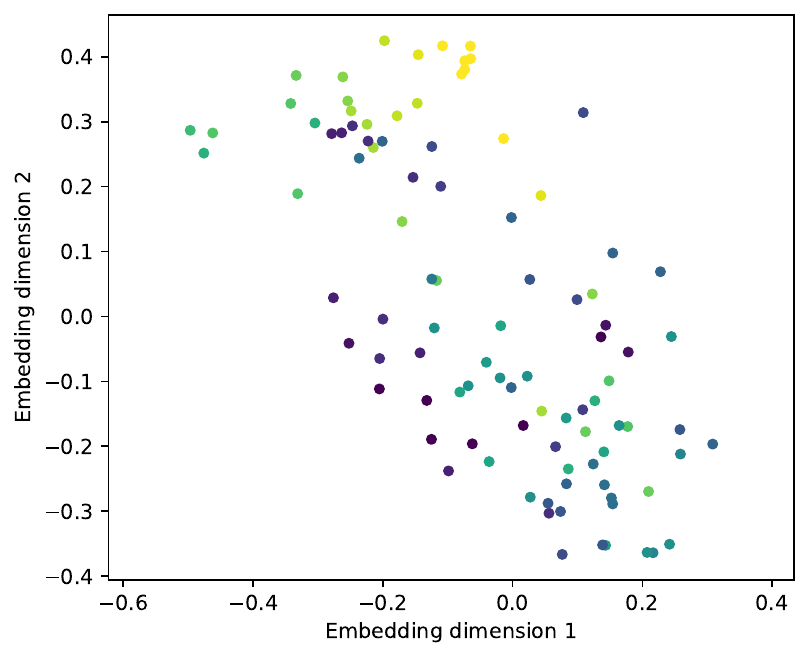}
        \caption{Direct multi-view GW.}
        \label{fig:era5_test_direct_gw}
    \end{subfigure}
    \hfill
    \begin{subfigure}[t]{0.48\textwidth}
        \centering
        \includegraphics[width=\linewidth]{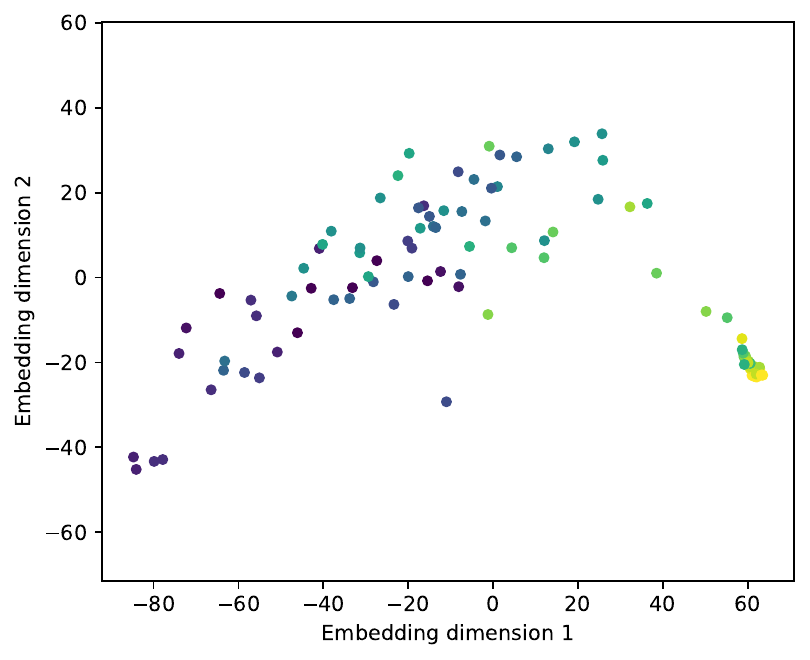}
        \caption{Concatenated PCA.}
        \label{fig:era5_test_pca}
    \end{subfigure}

    \caption{Out-of-sample embeddings of the held-out ERA5 locations
    under geodesic relations.}
    \label{fig:era5_geodesic_test}
\end{figure*}

Figure~\ref{fig:era5_geodesic_test} compares the representations produced
for the held-out locations. Both distilled models generate coherent
out-of-sample embeddings without constructing a test-set transport plan.
Inductive Mean-GWMDS retains the consensus structure of its teacher and
provides the most balanced preservation across the four views. Inductive
Multi-GWMDS instead reproduces the selected temperature-induced
projection, yielding stronger preservation of the temperature geometry
but weaker agreement with the remaining views.

The direct neural GW model produces a substantially less informative
sample-indexed organization, despite attaining a final GW objective close
to that of the Multi-GWMDS teacher. This visual discrepancy reinforces
the quantitative results in
Table~\ref{tab:era5_objective_mismatch}: optimizing structural agreement
under a freely estimated coupling does not ensure that the network output
is aligned with the original sample identities. The PCA baseline captures
part of the dominant global variation, but it does not explicitly balance
the distinct relational geometries encoded by the four views.

\paragraph{Optimization behavior.}

\begin{figure*}[t]
    \centering

    \begin{subfigure}[t]{0.32\textwidth}
        \centering
        \includegraphics[width=\linewidth]{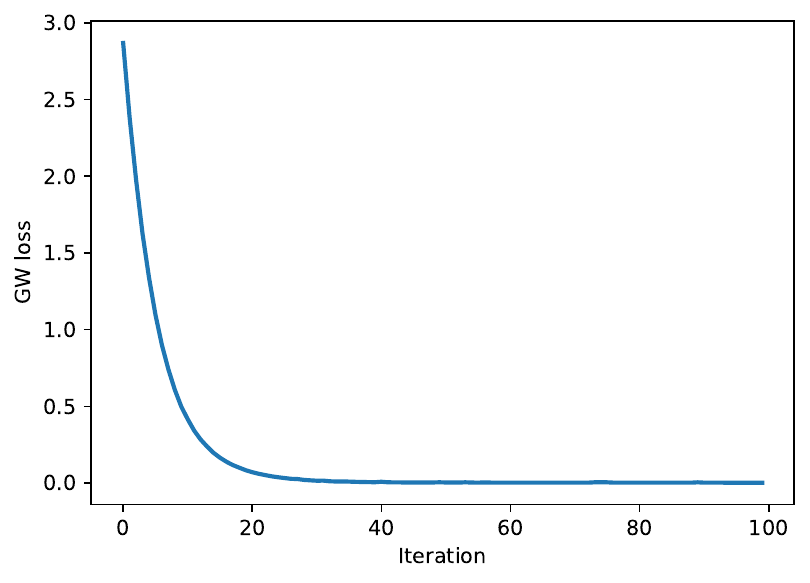}
        \caption{Mean-GWMDS teacher.}
        \label{fig:era5_mean_teacher_convergence}
    \end{subfigure}
    \hfill
    \begin{subfigure}[t]{0.32\textwidth}
        \centering
        \includegraphics[width=\linewidth]{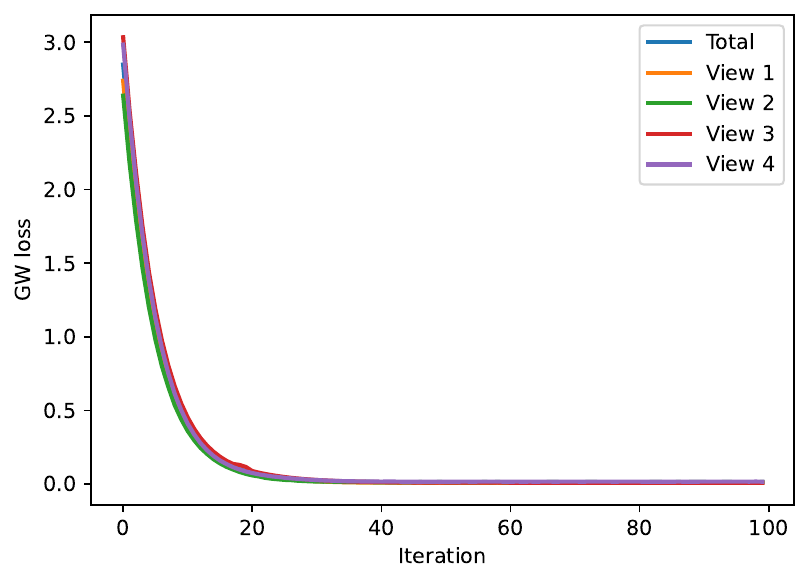}
        \caption{Multi-GWMDS teacher.}
        \label{fig:era5_multi_teacher_convergence}
    \end{subfigure}
    \hfill
    \begin{subfigure}[t]{0.32\textwidth}
        \centering
        \includegraphics[width=\linewidth]{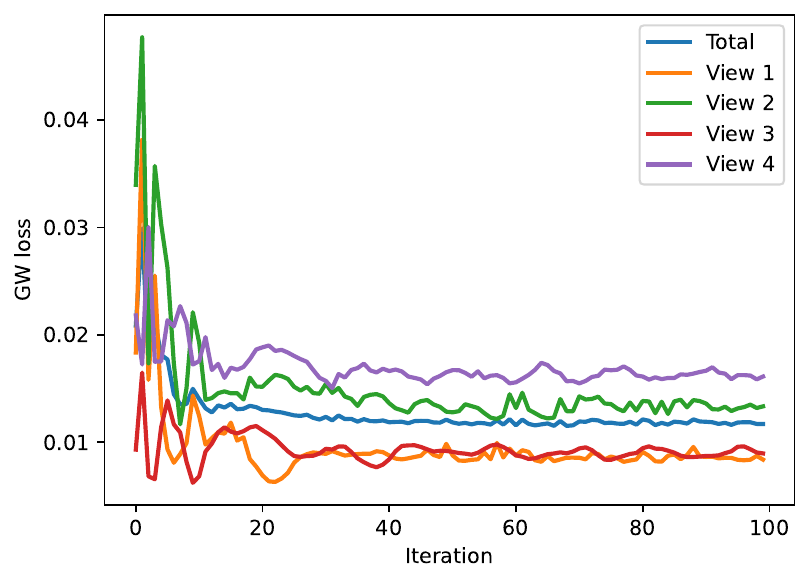}
        \caption{Direct multi-view GW.}
        \label{fig:era5_direct_gw_convergence}
    \end{subfigure}

    \vspace{0.8em}

    \begin{subfigure}[t]{0.40\textwidth}
        \centering
        \includegraphics[width=\linewidth]{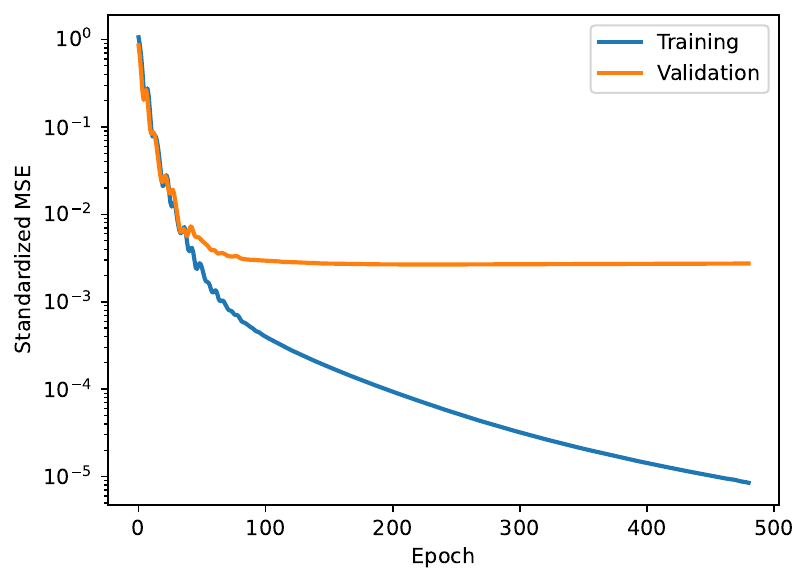}
        \caption{Inductive Mean-GWMDS distillation.}
        \label{fig:era5_mean_student_convergence}
    \end{subfigure}
    \hspace{0.04\textwidth}
    \begin{subfigure}[t]{0.40\textwidth}
        \centering
        \includegraphics[width=\linewidth]{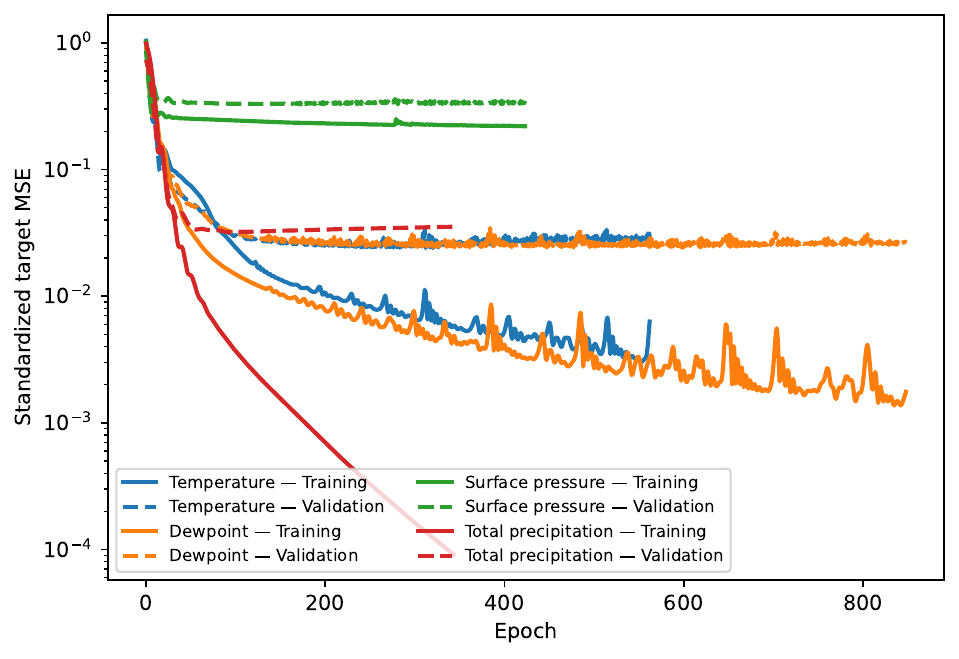}
        \caption{Multi-GWMDS projection distillation.}
        \label{fig:era5_multi_students_convergence}
    \end{subfigure}

    \caption{Optimization trajectories for the geodesic ERA5
    experiment.}
    \label{fig:era5_geodesic_convergence}
\end{figure*}

Figure~\ref{fig:era5_geodesic_convergence} reports the optimization trajectories of the geodesic models. The Mean-GWMDS and Multi-GWMDS teacher objectives decrease and stabilize over the outer iterations, indicating that the barycentric targets are extracted from converged relational representations. Multi-GWMDS is more expensive because each outer iteration requires a separate GW-plan update for every view.

The student losses decrease rapidly and smoothly, showing that the
sample-indexed barycentric targets can be accurately approximated by the
neural mapping. The direct neural GW objective also decreases to a value
comparable to that of the transductive teacher. Its convergence therefore
does not resolve the correspondence ambiguity; a small GW objective may
coexist with poor sample-indexed preservation. Taken together, the curves
indicate that the performance difference is caused by the supervision
provided by the barycentric targets rather than by a failure to optimize
the direct objective.

\subsection{Additional Experimental Analysis on rMD17-Aspirin}
\label{app:rmd17_analysis}



\paragraph{Teacher--student transfer.}
The distilled students closely reproduce their corresponding
barycentric targets on the training data. Across the three relational
geometries, the difference in view-averaged Pearson correlation between
the Mean-GWMDS teacher and the Inductive Mean-GWMDS student ranges from
\(0.0148\) to \(0.0197\). For the student associated with the projection
selected by Multi-GWMDS, the largest difference from its teacher is
\(0.0045\). These results show that the neural mappings accurately
approximate both the consensus target and the selected view-dependent
projection.

The molecular experiment is nevertheless more challenging than ERA5
from an out-of-sample perspective. Although the training losses approach
zero, the validation losses stabilize at higher values, with the best
validation losses ranging approximately from \(0.105\) to \(0.214\).
This separation indicates that interpolating the barycentric
organization of unseen molecular conformations is more difficult than
fitting the training targets. Validation-based early stopping is
therefore particularly important for this experiment, especially for
students trained exclusively from the force-Gram projection.

\paragraph{Out-of-sample performance.}
The view-averaged results are reported in
Table~\ref{tab:multiview_inductive_results}. We next complement those
results by examining teacher--student transfer, view-wise behavior,
projection selection, GW objective values, and computational cost.

\paragraph{View-wise behavior.}
The view-wise results in
Table~\ref{tab:rmd17_viewwise_correlations} reveal a substantial
imbalance between the two molecular representations.

\begin{table*}[t]
    \centering
    \caption{View-wise test Pearson correlations on rMD17-Aspirin.
    The reported embeddings are produced by the two inductive
    teacher--student formulations.}
    \label{tab:rmd17_viewwise_correlations}
    \small
    \begin{tabular}{llcc}
        \toprule
        Geometry & Method
        & Interatomic-distance view
        & Force-Gram view \\
        \midrule

        Euclidean
        & Inductive Mean-GWMDS
        & 0.7294 & \textbf{0.2178} \\

        & Inductive Multi-GWMDS
        & \textbf{0.7506} & 0.0624 \\
        \midrule

        Geodesic
        & Inductive Mean-GWMDS
        & 0.7860 & \textbf{0.1715} \\

        & Inductive Multi-GWMDS
        & \textbf{0.7928} & 0.0524 \\
        \midrule

        Cosine
        & Inductive Mean-GWMDS
        & 0.7740 & \textbf{0.2743} \\

        & Inductive Multi-GWMDS
        & \textbf{0.8204} & 0.0669 \\
        \bottomrule
    \end{tabular}
\end{table*}

For Inductive Mean-GWMDS, the test Pearson correlations with the
interatomic-distance view are \(0.7294\), \(0.7860\), and \(0.7740\)
under Euclidean, geodesic, and cosine relations, respectively.
The corresponding correlations with the force-Gram view are considerably
lower: \(0.2178\), \(0.1715\), and \(0.2743\).

The selected Multi-GWMDS projection is even more strongly associated
with the interatomic-distance view. Its correlations with this view
reach \(0.7506\), \(0.7928\), and \(0.8204\), whereas its correlations
with the force-Gram view decrease to \(0.0624\), \(0.0524\), and
\(0.0669\). Consequently, the stronger preservation of the dominant
interatomic-distance view by Inductive Multi-GWMDS is accompanied by a
substantial loss of information from the force-based representation.

Mean-GWMDS provides a more balanced multi-view compromise. It accepts a
small reduction in the preservation of interatomic distances in exchange
for markedly stronger agreement with the force-Gram relations. This
behavior explains why Inductive Mean-GWMDS achieves the best
view-averaged results under every relational geometry, despite not
attaining the highest correlation with the dominant view individually.

\paragraph{Projection selection.}
Table~\ref{tab:rmd17_projection_selection} reports the training-only
Pearson correlations of the two view-dependent Multi-GWMDS barycentric
projections with each molecular view. As expected, each projection
preserves primarily the relational geometry of the view from which it
is derived. Nevertheless, when the correlations are averaged uniformly
across the two views, the interatomic-distance projection achieves the
highest selection score under all three relational geometries: \(0.4642\),
\(0.4762\), and \(0.5153\) for Euclidean, geodesic, and cosine relations,
respectively. The corresponding scores of the Force-Gram projection are
\(0.3414\), \(0.3300\), and \(0.4199\), yielding selection margins of
\(0.1228\), \(0.1462\), and \(0.0954\).

\begin{table*}[t]
    \centering
    \caption{Training-only evaluation of the Multi-GWMDS barycentric projections on rMD17-Aspirin. For each candidate projection, $\rho_1$ and $\rho_2$ denote the Pearson correlations with the interatomic-distance and Force-Gram views, respectively. The mean is computed uniformly across the two views and used as the selection score. The selected projection is shown in bold. The last row reports the selected-minus-other difference.}
    \label{tab:rmd17_projection_selection}

    \resizebox{\textwidth}{!}{%
    \begin{tabular}{@{}lccccccccc@{}}
        \toprule
        & \multicolumn{3}{c}{Euclidean}
        & \multicolumn{3}{c}{Geodesic}
        & \multicolumn{3}{c}{Cosine} \\
        \cmidrule(lr){2-4}
        \cmidrule(lr){5-7}
        \cmidrule(lr){8-10}

        Barycentric projection
        & $\rho_1$ & $\rho_2$ & Mean
        & $\rho_1$ & $\rho_2$ & Mean
        & $\rho_1$ & $\rho_2$ & Mean \\
        \midrule

        Interatomic-distance view
        & 0.7966 & 0.1318 & \textbf{0.4642}
        & 0.8468 & 0.1057 & \textbf{0.4762}
        & 0.8610 & 0.1695 & \textbf{0.5153} \\

        Force-Gram view
        & 0.0926 & 0.5902 & 0.3414
        & 0.0682 & 0.5918 & 0.3300
        & 0.1195 & 0.7203 & 0.4199 \\
        \midrule

        Selected-minus-other difference
        & 0.7040 & $-0.4584$ & 0.1228
        & 0.7786 & $-0.4861$ & 0.1462
        & 0.7415 & $-0.5508$ & 0.0954 \\
        \bottomrule
    \end{tabular}%
    }
\end{table*}
Unlike the nearly tied Euclidean selection observed for ERA5, the
rMD17 selection is unambiguous under the present split. Importantly,
this result does not indicate that the selected projection is superior
for both views individually. Rather, its larger advantage with respect
to the interatomic-distance geometry outweighs its lower agreement with
the Force-Gram geometry under the mean-based selection criterion.

\paragraph{GW objective and sample-indexed preservation.}
Table~\ref{tab:rmd17_objective_mismatch} compares the final
Multi-GWMDS teacher objective with that of direct neural multi-view GW.

\begin{table}[h]
    \centering
    \caption{Final GW objectives and view-averaged test Pearson
    correlations on rMD17-Aspirin.}
    \label{tab:rmd17_objective_mismatch}
    \small
    \begin{tabular}{@{}lccc@{}}
        \toprule
        & Euclidean
        & \shortstack{Geodesic}
        & Cosine \\
        \midrule

        \shortstack[l]{Multi-GWMDS teacher objective}
        & 0.035079
        & 0.025891
        & 0.009187 \\

        \shortstack[l]{Direct GW objective}
        & 0.036273
        & 0.025977
        & 0.009195 \\

        \shortstack[l]{Inductive Multi-GWMDS test Pearson }
        & 0.4065
        & 0.4226
        & 0.4437 \\

        \shortstack[l]{Direct GW test Pearson }
        & 0.3540
        & 0.3471
        & 0.2942 \\

        \bottomrule
    \end{tabular}
\end{table}

Under geodesic relations, the Multi-GWMDS teacher and direct-GW
objectives are \(0.025891\) and \(0.025977\), respectively. Under cosine
relations, they are \(0.009187\) and \(0.009195\). Their relative
differences are approximately \(0.33\%\) and \(0.09\%\), indicating
that both methods attain almost identical structural objectives. The
Euclidean relative difference is somewhat larger, approximately
\(3.4\%\), but the final values remain of the same order.

Despite this objective-level agreement, the out-of-sample
representations differ substantially. In the cosine experiment,
Inductive Multi-GWMDS achieves a test Pearson correlation of \(0.4437\),
compared with only \(0.2942\) for direct multi-view GW, producing a
difference of \(0.1495\). The weaker direct-GW result therefore cannot
be explained merely by insufficient minimization of its training
objective.

This result reinforces the distinction between structural agreement
under an optimized coupling and sample-indexed prediction. A small GW
objective can be obtained through a coupling that reorganizes the
observations, whereas a neural predictor must associate each input
conformation with the appropriate latent point. Barycentric distillation
explicitly supplies this sample-indexed supervision, while the direct
objective does not uniquely determine the required correspondence.

\paragraph{Computational cost.}
Table~\ref{tab:rmd17_computational_cost} reports the execution times
averaged across the three relational geometries.

\begin{table*}[h]
    \centering
    \caption{Average execution times on rMD17-Aspirin. Teacher and
    student times are reported separately to distinguish the initial
    relational optimization from neural distillation.}
    \label{tab:rmd17_computational_cost}
    \small
    \begin{tabular}{llr}
        \toprule
        Formulation & Optimization stage & Time (s) \\
        \midrule

        Inductive Mean-GWMDS
        & Mean-GWMDS teacher
        & 323.59 \\

        Inductive Mean-GWMDS
        & Consensus-target student
        & 1.63 \\

        Inductive Multi-GWMDS
        & Multi-GWMDS teacher
        & 607.56 \\

        Inductive Multi-GWMDS
        & Selected-projection student
        & 1.36 \\

        Direct multi-view GW
        & Direct neural optimization
        & 546.63 \\
        \bottomrule
    \end{tabular}
\end{table*}

The Mean-GWMDS teacher requires an average of \(323.59\) seconds,
whereas the Multi-GWMDS teacher requires \(607.56\) seconds because a
separate transport plan is updated for each view. The corresponding
student-training times are only \(1.63\) and \(1.36\) seconds.
Distillation therefore represents less than \(0.6\%\) of the
corresponding teacher-optimization time.

The complete Inductive Mean-GWMDS pipeline requires approximately
\(325.22\) seconds on average, while direct neural multi-view GW requires
\(546.63\) seconds. Thus, the direct formulation requires approximately
\(1.68\) times the runtime of the Inductive Mean-GWMDS pipeline. The
complete Inductive Multi-GWMDS pipeline requires approximately
\(608.92\) seconds and is therefore about \(11.4\%\) slower than direct
multi-view GW during initial training. 

\subsection{Limitations}
\label{app:limitations}

The proposed framework transfers out-of-sample prediction to a neural
student, but its teacher remains transductive and requires dense
pairwise relational matrices and transport plans. Consequently, the
training-stage memory requirement grows quadratically with the number
of observations, and repeated GW updates remain expensive for large
datasets.


\end{document}